\documentclass{article}
\PassOptionsToPackage{numbers,sort&compress}{natbib}
\usepackage{xcolor}
\usepackage[preprint]{corl_2026}

\usepackage[utf8]{inputenc}
\usepackage[T1]{fontenc}
\usepackage{hyperref}
\usepackage{url}
\usepackage{booktabs}
\usepackage{amsfonts}
\usepackage{nicefrac}
\usepackage{microtype}
\usepackage{enumitem}
\usepackage{multicol}
\usepackage{graphicx}
\usepackage{makecell}
\usepackage{multirow}
\usepackage{floatrow}
\usepackage{wrapfig}
\usepackage{soul}
\usepackage{amsmath}
\usepackage{subcaption}
\usepackage{lipsum}
\usepackage{colortbl}
\usepackage{svg}
\usepackage{bbding}
\usepackage{dsfont}

\usepackage{amssymb}
\usepackage{amsthm}
\usepackage{bbm}
\usepackage{bm}
\usepackage{comment}
\usepackage{algorithm}
\usepackage{algpseudocode}

\newtheorem{theorem}{Theorem}

\newtheorem{lemma}[theorem]{Lemma}
\theoremstyle{definition}

\newtheorem{definition}{Definition}

\usepackage{array}
\newcolumntype{C}[1]{>{\centering\arraybackslash}p{#1}}

\title{MASkillBlender: Decentralized Whole-Body Coordination for Multi-Humanoid Loco-Manipulation via Skill Blending}
\author{
\textbf{Yifan Hu}\textsuperscript{1} \quad
\textbf{Luhang Hong}\textsuperscript{1} \quad
\textbf{Mingkang Long}\textsuperscript{1} \quad
\textbf{Danning Wang}\textsuperscript{1} \\[0.3em]
\textbf{Chengfeng Jia}\textsuperscript{1} \quad
\textbf{Rong Su}\textsuperscript{1}\thanks{Correspondence to: Rong Su (RSu@ntu.edu.sg)} \quad
\textbf{Junjie Fu}\textsuperscript{2,3} \quad
\textbf{Guanghui Wen}\textsuperscript{2} \\[0.5em]
{\small\textsuperscript{1}Nanyang Technological University \quad
\textsuperscript{2}Southeast University} \\
{\small\textsuperscript{3}Purple Mountain Laboratories} \\[0.3em]
{\small Project page: \url{https://maskillblender.github.io/}}
}
\date{}
\hypersetup{
  pdftitle={MASkillBlender: Decentralized Whole-Body Coordination for Multi-Humanoid Loco-Manipulation via Skill Blending},
  pdfauthor={Yifan Hu, Luhang Hong, Mingkang Long, Danning Wang, Chengfeng Jia, Rong Su, Junjie Fu, Guanghui Wen},
  pdfsubject={}
}
\begin{document}
\normalsize

\maketitle

\vspace{-20pt}
\begin{figure*}[h]
\centering
\includegraphics[width=0.9\linewidth]{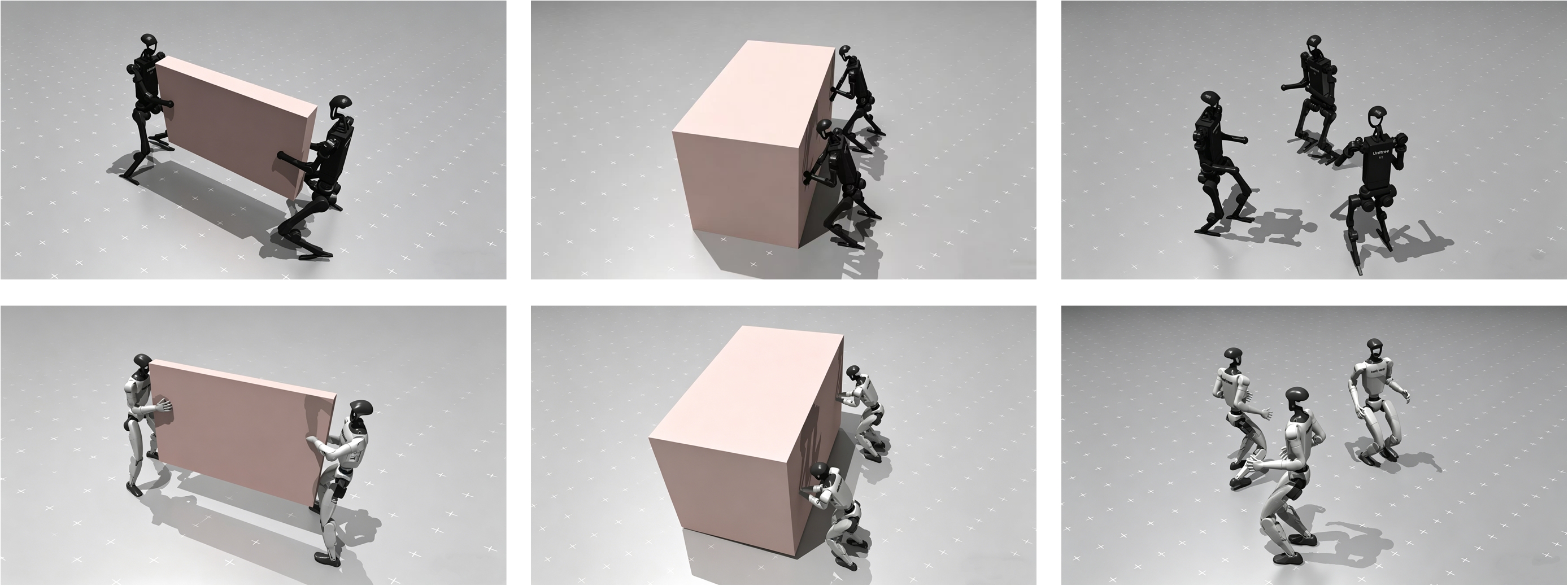}
\caption{MASkillBlender enables decentralized multi-humanoid whole-body coordination across diverse tasks and humanoid embodiments without requiring task-specific motion references.}
\label{fig:versatile_task}
\end{figure*}

\begin{abstract}
Coordinated multi-humanoid loco-manipulation is promising yet challenging due to high-dimensional whole-body control, decentralized decision making, and scalability. While recent reinforcement learning methods have improved single-humanoid whole-body control, extending them to the multi-humanoid setting remains nontrivial and often requires substantial reward engineering or task-specific design. We propose MASkillBlender, a general multi-agent reinforcement learning framework to achieve decentralized multi-humanoid whole-body coordination. By learning a shared decentralized high-level policy over reusable pre-trained single-humanoid skills, MASkillBlender enables coordinated behaviors using only task-level rewards, without requiring task-specific motion references. To improve learning efficiency, we further introduce a permutation-based data augmentation strategy for homogeneous multi-humanoid systems, and theoretically show that the permuted samples preserve the policy-gradient direction of the original samples under the homogeneous Markov game formulation. We evaluate MASkillBlender on multiple multi-humanoid coordination tasks across two humanoid embodiments. Simulation results demonstrate that the proposed framework consistently achieves strong task performance and enables coordinated behaviors across different tasks and humanoid embodiments.
\end{abstract}

\keywords{Multi-Agent Reinforcement Learning, Decentralized Multi-Humanoid Coordination, Loco-Manipulation, Skill Blending, Data Augmentation}

\section{Introduction}  \label{intro}
Humanoid robots have substantial potential to replace or assist humans in completing daily tasks across diverse environments due to their human-like morphology~\citep{gu2025humanoid}. With advancements in reinforcement learning (RL) and robot hardware, the loco-manipulation capabilities of humanoids have been significantly enhanced~\citep{he2025asap}. Nevertheless, many real-world tasks require multiple humanoids to coordinate their actions~\citep{zhang2023simulation}. Compared with single-humanoid control, multi-humanoid settings introduce additional challenges due to complex inter-agent dependencies. Developing general learning methods for coordinated multi-humanoid control therefore represents an important step toward scalable and capable humanoid teams.

Recent studies have demonstrated increasingly sophisticated multi-humanoid whole-body interactions, showing the potential of learning-based approaches for coordinated humanoid control~\citep{feng2026multiagent}. However, existing methods typically rely on human motion references to guide policy learning through motion imitation or tracking objectives. While such motion priors provide effective guidance for acquiring natural whole-body behaviors, they also introduce additional data and supervision requirements, and the resulting motion-based objectives may compete with task rewards during optimization, potentially constraining the policy from discovering coordination strategies that are most effective for the task~\citep{mo2026learning}. Beyond the reliance on motion supervision, incorporating observations of other humanoids into each agent's policy has been widely adopted to facilitate coordination. While such inter-agent information can improve learning and coordination performance, it may hinder scalability to larger teams and complicate robust real-world deployment.

For single-humanoid loco-manipulation, skill-based learning has emerged as an effective paradigm that does not rely on task-specific motion references~\citep{sferrazza2024humanoidbench, kuang2025skillblender}. By blending pre-trained primitive skills into the control policy, humanoids can reuse existing motor capabilities and reduce the burden of low-level control learning during downstream task training. However, extending this paradigm to multi-humanoid control is not straightforward, since skills pre-trained for individual humanoids do not encode the inter-agent coordination required by downstream tasks. Learning how to compose these skills across agents under coupled task dynamics therefore remains challenging.

In this work, we propose MASkillBlender, a general skill-based multi-agent RL (MARL) framework for decentralized coordination in homogeneous multi-humanoid systems. Under centralized training with decentralized execution (CTDE), a shared decentralized high-level policy learns to coordinate the composition of reusable pre-trained primitive skills across humanoids using local observations. The primitive skills remain fixed during downstream training, allowing the policy to focus on high-level skill composition and inter-agent coordination using only task-level rewards. To improve learning efficiency, we introduce a permutation-based data augmentation strategy that exploits the permutation symmetry of homogeneous multi-humanoid systems. To evaluate the proposed framework, we build three representative multi-humanoid coordination tasks spanning cooperative object manipulation and coordinated locomotion across two distinct humanoid embodiments, using only task-level rewards and limited local observations. To summarize, the main contributions of this work are as follows:

\begin{itemize}[noitemsep, leftmargin=*]
    \item We propose MASkillBlender, a general skill-based MARL framework that learns decentralized multi-humanoid coordination over reusable pre-trained single-humanoid skills using only task-level rewards, without requiring task-specific motion references.
    \item We incorporate a permutation-based data augmentation strategy into MASkillBlender and theoretically show that the augmented samples preserve the policy-gradient direction of the original samples.
    \item We evaluate the proposed MARL framework on diverse multi-humanoid coordination tasks across different embodiments, and extensive simulation results show that the learned policies produce versatile coordinated behaviors and effectively handle all tasks in a fully decentralized manner.
\end{itemize}

\section{Related Works}  \label{related_work}

\paragraph{Single-Humanoid Whole-Body Control with RL.}
RL has become a key paradigm for humanoid whole-body control, facilitating complex locomotion and manipulation~\citep{radosavovic2024jitendra,gu2024advancing,chen2024learning,chen2026physics,lin2025sim}. Existing RL-based methods can be broadly categorized according to whether the control policy relies on task-specific motion references. Reference-conditioned methods learn versatile whole-body behaviors by tracking motions obtained from MoCap datasets~\citep{cheng2024expressive,ji2024exbody2,yang2025omniretarget} or human teleoperation~\citep{fu2024humanplus,he2024h2o,he2024omnih2o,sun2025ulc,ze2025twist,ben2025homie}. While effective for single-humanoid loco-manipulation, extending this paradigm to multi-humanoid coordination requires coordinated motion references that are substantially more difficult to acquire. In contrast, task-driven methods learn loco-manipulation behaviors without conditioning on task-specific reference motions~\citep{sferrazza2024humanoidbench,kuang2025skillblender,dao2024sim,zhang2024wococo}. To address the difficulty of learning in high-dimensional state-action spaces, these methods typically employ task decomposition~\citep{dao2024sim,zhang2024wococo} or skill-based hierarchical control architectures~\citep{sferrazza2024humanoidbench,kuang2025skillblender}. For the latter, pre-trained low-level skills are reused and composed by a high-level policy to reduce the burden of learning whole-body control from scratch, which empirically improves learning efficiency and enables diverse loco-manipulation behaviors across downstream tasks.

\paragraph{Multi-Humanoid Whole-Body Coordination with RL.}
Learning coordinated whole-body behaviors among multiple humanoids has recently received growing attention. SMPLOlympics~\citep{luo2024smplolympics} explores competitive interactions between physically simulated humanoids in diverse sports environments. For cooperative coordination, reference-based methods retarget multi-person motions to humanoids and learn interaction policies through motion tracking and interaction-specific objectives~\citep{liu2025takes,chen2026rhythm}, enabling complex whole-body interactions such as contact-rich social behaviors and synchronized motions. To alleviate the scarcity of multi-person motion data, CooHOI~\citep{gao2024coohoi} and TeamHOI~\citep{lionar2026teamhoi} instead leverage single-human motion priors to facilitate cooperative object manipulation. CooHOI transfers single-agent object-interaction skills to cooperative settings, while TeamHOI employs masked motion priors and a teammate-aware policy to support cooperative carrying with varying team sizes. Note that these methods are typically task-specific and require high-quality motion retargeting to obtain suitable humanoid references. CDA-PPO~\citep{baltes2025cooperative} follows a conventional MARL formulation and learns cooperative object transportation from scratch without motion supervision. However, it is specifically designed for a two-humanoid setting and relies on extensively engineered locomotion and cooperation rewards as well as explicit teammate observations. As a result, it remains a huge challenge to develop general reference-free RL methods for diverse multi-humanoid whole-body coordination with lightweight reward engineering.

\section{Preliminaries}  \label{preliminaries}

\subsection{Homogeneous Markov Games}
Markov Games (MGs) provide a general framework for formulating multi-agent policy learning problems~\citep{zhang2021multi}. An MG is defined as $\langle N, \mathcal{S}, \{\mathcal{A}_i\}_{i=1}^N, \{R_i\}_{i=1}^N, p, \gamma \rangle$, where $N$ is the number of agents, $\mathcal{S}$ is the state space, $\mathcal{A}_i$ is the action space of agent $i$ with $\mathcal{A} = \prod_{i=1}^N \mathcal{A}_i$ being the joint action space, and $R_i: \mathcal{S} \times \mathcal{A} \to \mathbb{R}$ is the corresponding reward function. The transition function $p: \mathcal{S} \times \mathcal{A} \times \mathcal{S} \to [0,+ \infty)$ specifies the environment dynamics, and $\gamma \in [0, 1)$ is the discount factor. For homogeneous agents, a subclass of MGs, termed H-MGs~\citep{chen2022iclr,hu2024distributed}, can be defined as follows:

\begin{definition}  \label{definition:Homogeneous Markov game}
   An MG $\langle N, \mathcal{S}, \{\mathcal{A}_i\}_{i=1}^N, \{R_i\}_{i=1}^N, p, \gamma \rangle$ is an H-MG if it satisfies the following conditions:
   \begin{itemize} [leftmargin=10pt, labelsep=.5em, align=left]
       \item \label{item:homogeneous state-action spaces}
       The state space is the product space of $N$ homogeneous local state spaces, i.e., $s = (s_1, \dots, s_N) \in \mathcal{S} = \mathcal{S}_1 \times \cdots \times \mathcal{S}_N$ with $\mathcal{S}_i = \mathcal{S}_j$, $\forall i, j \in \mathcal{I}$, where $s_i \in \mathcal{S}_i$, $\forall i \in \mathcal{I}$. The local action spaces of all agents are identical, i.e., $\mathcal{A}_i = \mathcal{A}_j$, $\forall i, j \in \mathcal{I}$\footnote{All notations are summarized in Appendix \ref{appendix_notation}.}.
       \item \label{item:permutation invariant and permutation preserving}
       The joint reward function $R(s_t, a_t) = (R_1(s_t, a_t), \dots, R_N(s_t, a_t))$ is permutation preserving, and the transition function is permutation invariant, i.e., for any $M \in \mathcal{M}$, $s_t = (s_{1,t}, \dots, s_{N,t}) \in \mathcal{S}$, and $a_t = (a_{1,t}, \dots, a_{N,t}) \in \mathcal{A}$, it holds
       \begin{align*}
            R(M s_t, M a_t) = M R(s_t, a_t), \quad p(s_{t+1} | s_t, a_t) =  p(M s_{t+1}|M s_t, M a_t).
       \end{align*}
       \item \label{item:observation permutation preserving}
       Each agent $i \in \mathcal{I}$ obtains its local observation through a surjective function $o_i: \mathcal{S} \to \mathcal{O}$, where the observation space $\mathcal{O}$ is shared by all agents. Furthermore, the joint observation $o(s) = (o_1(s), \dots, o_N(s))$ is permutation preserving, i.e., for any $s \in \mathcal{S}$ and $M \in \mathcal{M}$, it holds
       \begin{align*}
           o(M s) = M o(s).
       \end{align*}
   \end{itemize}
\end{definition}

\subsection{Problem Formulation}
We formulate the multi-humanoid whole-body control policy learning problem as a MARL problem under the H-MG framework. Notice that H-MGs are capable of modeling diverse multi-humanoid coordination tasks, where one example is provided in Appendix~\ref{appendix_example} to give more intuition about this formulation. Following prior work on single-humanoid RL~\citep{zhang2024wococo,kuang2025skillblender}, the local observation of each humanoid consists of its proprioceptive state and a local task goal. The local action corresponds to target joint positions of the humanoid's whole body, which are mapped to motor torques through PD control. Moreover, all humanoids share a team reward function, denoted by $R_{\mathrm{team}}$, such that $R_{\mathrm{team}} = R_i$, $\forall i \in \mathcal{I}$. Let $\pi^{\mathrm{tot}}: \mathcal{S} \times \mathcal{A} \to [0, \infty)$ be the joint policy, based on which the value function is defined as $V_{\pi^{\mathrm{tot}}}(s) = \mathbb{E}\left[\sum_{t=0}^\infty \gamma^t R_{\mathrm{team}}(s_t, a_t)|s_0 = s\right]$. The target is to find an optimal policy $\pi^{\mathrm{tot}}_*$ that maximizes the objective function $J(\pi^{\mathrm{tot}}) = \mathbb{E}_{s_0 \sim \rho}\left[V_{\pi^{\mathrm{tot}}}(s_0)\right]$, where $\rho$ denotes the initial state distribution over $\mathcal{S}$.

\begin{figure*}[!t]

\centering
\includegraphics[width=1.0\linewidth]{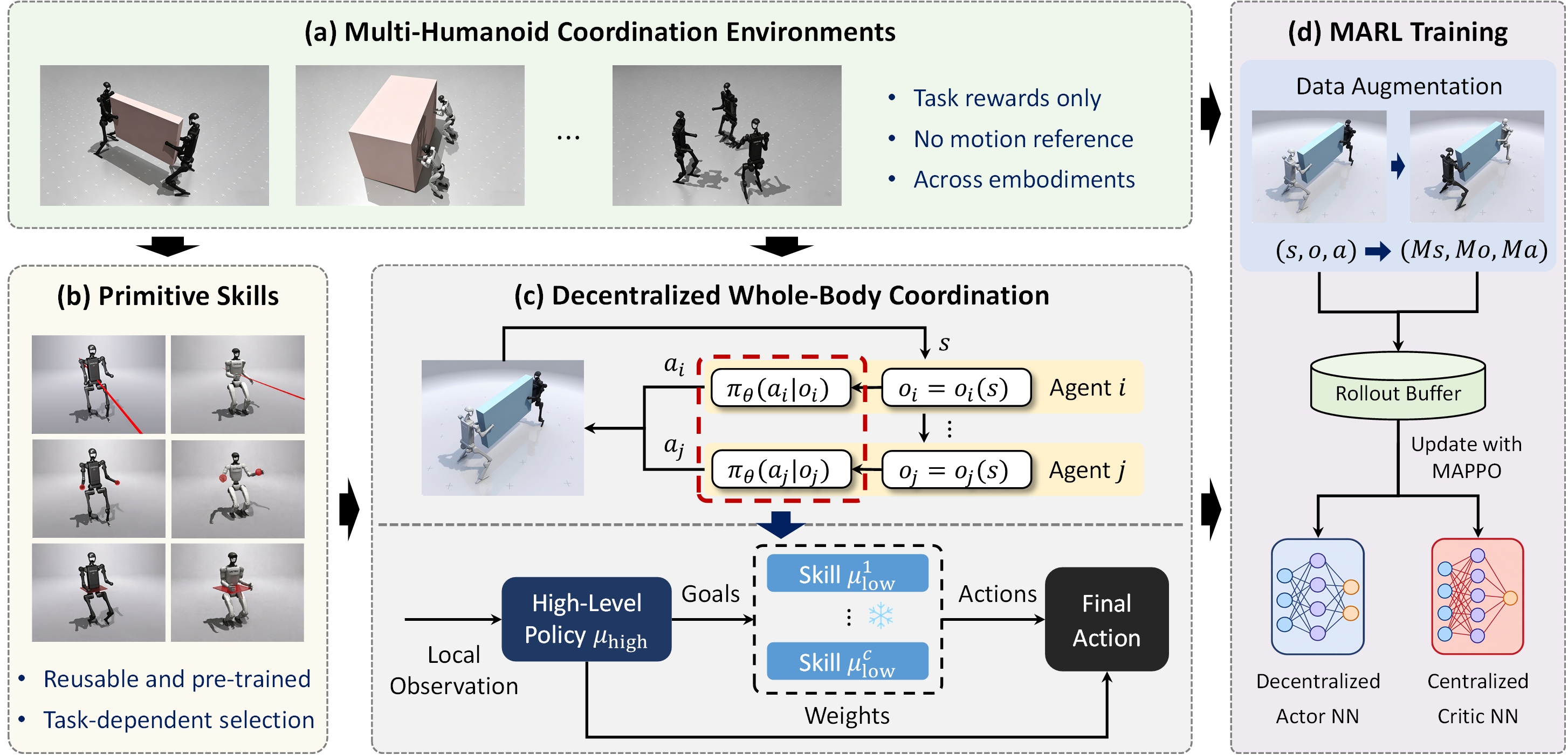}
\caption{Overview of MASkillBlender. (a) MASkillBlender enables diverse multi-humanoid coordination across different humanoid embodiments without task-specific motion references or extensive reward engineering. (b)-(c) For each task, relevant pre-trained primitive skills are selected and blended by a shared decentralized high-level policy. (d) The high-level policy is trained with MAPPO using permutation-based data augmentation.}
\label{fig:maskillblender_framework}
\end{figure*}

\section{Methodology} \label{sec:methodology}
In practice, it is challenging to learn coordinated humanoid whole-body control policies with vanilla MARL. Without task-specific motion references, learning from scratch requires the policy to simultaneously acquire low-level motor control and inter-agent coordination. Moreover, the data-collection cost increases substantially as more humanoids are included into the environment, making iterative reward engineering particularly costly. To deal with these challenges, we propose MASkillBlender (see Fig.~\ref{fig:maskillblender_framework}), which combines MARL with reusable pre-trained skills for decentralized multi-humanoid coordination and permutation-based data augmentation for improved data efficiency. We first introduce the decentralized skill-blended control policy in Sec.~\ref{subsec:policy}, then the permutation-based data augmentation in Sec.~\ref{subsec:data_aug}, and finally the overall MARL framework in Sec.~\ref{subsec:final_rl_framework}.

\subsection{Decentralized Skill-Blended Control Policy} \label{subsec:policy}
As shown in Fig.~\ref{fig:maskillblender_framework}(c), we adopt decentralized control policies for scalable multi-humanoid coordination. We employ policy sharing~\citep{gupta2017cooperative} due to its empirical effectiveness in MARL~\citep{hu2023graph} and theoretical support for optimality in homogeneous settings~\citep{chen2022iclr}. Accordingly, the joint policy factorizes as $\pi^{\mathrm{tot}}(a_t|s_t)=\prod_{i=1}^{N}\pi(a_{i,t}|o_i(s_t))$, where $\pi$ denotes an observation-based local policy shared by all humanoids. Let $s_{i,t}^{\mathrm{prop}}$ and $g_{i,t}$ denote the proprioceptive state and the local task goal of humanoid $i$, respectively. Then, the local observation is given by $o_i(s_t)=(s_{i,t}^{\mathrm{prop}},g_{i,t})$. Note that the task goal $g_{i,t}$ may vary across humanoids but has the same dimensionality for all agents.

Multi-humanoid loco-manipulation can be viewed as coordinating the individual loco-manipulation capabilities of multiple humanoids toward a shared task. Recent work on single-humanoid whole-body control has shown that complex loco-manipulation behaviors can be composed from reusable motor skills rather than learned entirely from scratch~\citep{kuang2025skillblender}. Following this intuition, we build the decentralized local policy $\pi$ upon a set of pre-trained primitive skills. For each humanoid embodiment, we construct a reusable skill library consisting of task-agnostic primitive skills (see Fig.~\ref{fig:maskillblender_framework}(b)), including \texttt{Walking}, \texttt{Reaching}, and \texttt{Squatting}, which provide basic motor capabilities for synthesizing diverse whole-body behaviors. These skills are pre-trained using dense rewards to capture low-level control objectives such as whole-body stability and energy efficiency, thereby reducing the requirement to explicitly encode these objectives in the reward design for multi-humanoid coordination. Brief descriptions of these primitive skills are provided in Appendix~\ref{appendix_description_skills}. Note that the skill library can be extended with additional primitive skills to support more complex coordination behaviors.

Following~\citet{kuang2025skillblender}, we adopt a hierarchical policy architecture to blend primitive skills. For each coordination task, we first select $C$ relevant primitive skills from the skill library according to the task requirements. For instance, in a box-pushing scenario, the coordinated behavior mainly relies on the \texttt{Walking} and \texttt{Reaching} skills, whereas \texttt{Squatting} contributes little. Accordingly, we select only \texttt{Walking} and \texttt{Reaching} as the low-level skills for our hierarchical policy. Let $\mu_{\mathrm{low}}^c$ denote the $c$-th primitive skill. At each time instant $t$, given the proprioceptive state $s_{i,t}^{\mathrm{prop}}$ and the skill-dependent goal $g^c_{i,t} \in \mathbb{R}^{d_c}$ as inputs, $\mu_{\mathrm{low}}^c$ outputs a deterministic local action for humanoid $i$, given by $u^c_{i,t} = \mu_{\mathrm{low}}^c(s_{i,t}^{\mathrm{prop}}, g^c_{i,t}) \in \mathbb{R}^{d}$, where $u^c_{i,t}$ denotes the target joint positions and $d$ denotes the number of DoFs of the humanoid. Note that the dimension of $g^c_{i,t}$ varies across skills.

After the primitive skills are selected, a decentralized high-level policy is built to blend these skills, which is illustrated in Fig.~\ref{fig:maskillblender_framework}(c). Specifically, at each time instant $t$, humanoid $i$'s high-level policy $\mu_{\mathrm{high}}$ takes the local observation $o_i(s_t)$ as its input, and outputs a raw joint goal vector $\tilde g_{i,t}$ and a raw joint weight vector $\tilde w_{i,t}$, i.e., $[\tilde g_{i,t}^T, \tilde w_{i,t}^T]^T = \mu_{\mathrm{high}}(o_i(s_{t}))$, where $\tilde g_{i,t} = [(\tilde g^1_{i,t})^T, \dots, (\tilde g^C_{i,t})^T]^T$ with $\tilde g^c_{i,t} \in \mathbb{R}^{d_c}$ and $\tilde w_{i,t} = [(\tilde w^1_{i,t})^T, \dots, (\tilde w^C_{i,t})^T]^T$ with $\tilde w^c_{i,t} = [\tilde w^c_{i,t,1}, \dots, \tilde w^c_{i,t,d}]^T \in \mathbb{R}^{d}$ for any $c$. To provide reasonable goal information, a skill-specific clipping function is applied to $\tilde g_{i,t}$ to produce the final goal vector $g^c_{i,t}$, which is then used by each primitive skill $\mu_{\mathrm{low}}^c$ to generate its action $u^c_{i,t}$. To avoid exaggerated values when blending these actions, each raw weight vector $\tilde w^c_{i,t}$ is processed through a pointwise Softmax function. Let $w_{i,t}^c = [w^c_{i,t,1}, \dots, w^c_{i,t,d}]^T$ denote the processed weight vector of the $c$-th primitive skill, where each $w^c_{i,t,k}$ satisfies $w^c_{i,t,k} = \mathrm{softmax}_c([\tilde w^1_{i,t,k}, \dots, \tilde w^C_{i,t,k}]^T)$. Based on the weight vectors $\{w_{i,t}^c\}_{c=1}^C$, all actions $\{u^c_{i,t}\}_{c=1}^C$ are blended to generate the final local action $u_{i,t} = [u_{i,t,1}, \dots, u_{i,t,d}]^T$, where each $u_{i,t,k}$ satisfies
\begin{align}
    u_{i,t,k} = \sum_{c=1}^C w_{i,t,k}^c \cdot u^c_{i,t,k}, \quad \quad \forall k = 1, \dots, d.
\end{align}
Note that both the primitive skills and the high-level policy require only local information. Therefore, the resulting hierarchical whole-body control policy is decentralized.

To facilitate RL-based policy training, we parameterize the high-level policy as $\mu_{\mathrm{high},\xi}$ and model the resulting decentralized policy as a $d$-dimensional diagonal Gaussian distribution: $\pi_{\theta}(\cdot | o_i(s_t)) = \mathcal{N}(u_{i,t}, \mathrm{diag}(\sigma^2))$, where $\theta = [\xi^T, \sigma^T]^T$ denotes the learnable parameters and $\sigma \in \mathbb{R}^d$ is the standard deviation vector. Denote the corresponding parameterized joint policy by $\pi^{\mathrm{tot}}_\theta$. Note that the primitive skills remain fixed during training, and their parameters are therefore excluded from $\theta$. This shifts the learning problem from acquiring low-level whole-body motor skills from scratch to learning high-level skill composition and inter-agent coordination over a fixed set of reusable skills, thereby structuring policy learning and making the MARL problem more tractable.

\subsection{Permutation-Based Data Augmentation} \label{subsec:data_aug}
RL-based humanoid control typically relies on large-scale parallel simulation~\citep{makoviychuk2021isaac}. Nevertheless, the simulation cost increases substantially in the multi-humanoid setting due to the larger number of agents and more complex inter-agent interactions. To improve data efficiency, we exploit the permutation properties of H-MGs~\citep{chen2022iclr} and incorporate a permutation-based data augmentation strategy into policy learning.

Denote $o_t = o(s_t)$. For any permutation $M \in \mathcal{M}$, we can apply a permutation transformation to the collected data sample $(s_t, o_t, a_t)$ to obtain the permuted counterpart, i.e.,
\begin{align}
    (s_t, o_t, a_t) \xrightarrow{\;\;M\;\;} (M s_t, M o_t, M a_t).
\end{align}
For notational convenience, let $V_{\theta}(s) = V_{\pi^{\mathrm{tot}}_\theta}(s)$, based on which the action-value function is defined as $Q_{\theta}(s,a) = R_{\mathrm{team}}(s, a) + \gamma \mathbb{E}_{s' \sim p(\cdot | s, a)}[V_{\theta}(s')]$~\citep{sutton2018book}. Under the shared decentralized policy, we consider a joint-policy surrogate objective $L_{\theta_k}(\theta)$ following the TRPO/PPO formulation~\citep{schulman2015trust,schulman2017proximal}. Its gradient can be expressed in the following importance-sampling form:
\begin{align} \label{importance_sampling_based_policy_gradient}
    \nabla_{\theta} L_{\theta_k}(\theta) = \mathbb{E}_{s_t \sim \rho_{\pi^{\mathrm{tot}}_{\theta_k}}, a_t \sim {\pi^{\mathrm{tot}}_{\theta_k}}} \left[\frac{\pi^{\mathrm{tot}}_\theta(a_t|s_t)}{\pi^{\mathrm{tot}}_{\theta_{k}}(a_t|s_t)} A_{\theta_k}(s_t,a_t)\, \nabla_{\theta}\log \pi^{\mathrm{tot}}_{\theta}(a_t | s_t) \right],
\end{align}
where $\pi^{\mathrm{tot}}_{\theta_k}$ is a rollout policy, and $\rho_{\pi^{\mathrm{tot}}_{\theta}}(s) = \sum_{t=0}^\infty \gamma^t \mathbb{P}_{\theta}(s_t = s)$ is the (unnormalized) visitation measure of $s$, where $\mathbb{P}_{\theta}(s_t = s)$ denotes the marginal state distribution w.r.t. the policy $\pi^{\mathrm{tot}}_{\theta}$, and $A_{\theta}(s_t,a_t) = Q_{\theta}(s_t,a_t) - V_{\theta}(s_t)$ is the advantage function. Denote the policy gradient induced by the data sample $(s_t, o_t, a_t)$ by
\begin{align}
    f(s_t,a_t,o_t;\theta,\theta_k) = \frac{\pi^{\mathrm{tot}}_\theta(a_t|s_t)}{\pi^{\mathrm{tot}}_{\theta_{k}}(a_t|s_t)} A_{\theta_k}(s_t,a_t)\, \nabla_{\theta}\log \pi^{\mathrm{tot}}_{\theta}(a_t | s_t).
\end{align}

\begin{lemma} \label{lemma_value_function}
    For an H-MG in which all agents share a team reward $R_{\mathrm{team}}$ and an observation-based local policy $\pi_{\theta}$, both the value function $V_{\theta}$ and the action-value function $Q_{\theta}$ are permutation invariant, i.e., $V_{\theta}(s) = V_{\theta}(Ms)$ and $Q_{\theta}(s,a) = Q_{\theta}(Ms, Ma)$, $\forall s \in \mathcal{S}$, $\forall a \in \mathcal{A}$, $\forall M \in \mathcal{M}$.
\end{lemma}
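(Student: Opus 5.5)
The plan is to prove permutation invariance of the joint policy first, and then transfer it to $V_\theta$ and $Q_\theta$ through a trajectory-level (coupling) argument.

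\textbf{Step 1: the joint policy is permutation invariant.} I will show that
\begin{align*}
\pi^{\mathrm{tot}}_\theta(Ma\mid Ms)=\pi^{\mathrm{tot}}_\theta(a\mid s)\qquad \text{for all } M\in\mathcal{M}.
\end{align*}
Let $\tau$ be the index permutation associated with $M$, so that $(Ma)_i=a_{\tau(i)}$. By the observation property of Definition~\ref{definition:Homogeneous Markov game}, $o(Ms)=Mo(s)$, hence $o_i(Ms)=o_{\tau(i)}(s)$. Using the factorization $\pi^{\mathrm{tot}}_\theta(a\mid s)=\prod_i\pi_\theta(a_i\mid o_i(s))$, we get
\begin{align*}
\pi^{\mathrm{tot}}_\theta(Ma\mid Ms)=\prod_i \pi_\theta\bigl(a_{\tau(i)}\mid o_{\tau(i)}(s)\bigr).
\end{align*}
This is a reindexing of the original product, so it equals $\pi^{\mathrm{tot}}_\theta(a\mid s)$. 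The argument uses only that the local policy $\pi_\theta$ is shared across agents. I will also record a reward fact: the team reward makes $R=(R_{\mathrm{team}},\dots,R_{\mathrm{team}})$, so permutation preservation $R(Ms,Ma)=MR(s,a)$ gives the scalar identity $R_{\mathrm{team}}(Ms,Ma)=R_{\mathrm{team}}(s,a)$.

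\textbf{Step 2: invariance of $V_\theta$ by coupling trajectories.} Fix $s_0=s$ and consider the map $(s_0,a_0,s_1,a_1,\dots)\mapsto(Ms_0,Ma_0,Ms_1,Ma_1,\dots)$. I will show by induction on the horizon $T$ that the finite-horizon path density started from $Ms$ at the permuted path equals the path density started from $s$ at the original path.
\begin{itemize}
\item The policy factors match by Step~1.
\item The transition factors match because $p(Ms_{t+1}\mid Ms_t,Ma_t)=p(s_{t+1}\mid s_t,a_t)$.
\item The map is a measure-preserving bijection, since $M$ is a permutation of coordinates.
\end{itemize}
Consequently, the discounted return $\sum_t\gamma^t R_{\mathrm{team}}(s_t,a_t)$ has the same law under both initial states, because each reward term is invariant by Step~1. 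Taking expectations gives $V_\theta(Ms)=V_\theta(s)$.

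Passing from finite to infinite horizons needs some integrability. I will handle it by assuming bounded rewards, as is implicit in the definition of $V_\theta$, and applying dominated convergence with $\gamma<1$. An alternative route avoids path measures: show that the Bellman operator $T_\theta$ commutes with the map $V\mapsto V\circ M$, so that it preserves the closed set of permutation-invariant bounded functions. Since $V_\theta$ is the unique fixed point of the contraction $T_\theta$, it lies in that set.

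\textbf{Step 3: invariance of $Q_\theta$.} By definition,
\begin{align*}
Q_\theta(Ms,Ma)=R_{\mathrm{team}}(Ms,Ma)+\gamma\int V_\theta(s')\,p(s'\mid Ms,Ma)\,ds'.
\end{align*}
I will substitute $s'=Ms''$, which has unit Jacobian because it is a coordinate permutation, and use $p(Ms''\mid Ms,Ma)=p(s''\mid s,a)$ together with $V_\theta(Ms'')=V_\theta(s'')$ from Step~2. This yields $Q_\theta(s,a)$.

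\textbf{Main obstacle.} I expect the hardest part to be Step~1's index bookkeeping, specifically getting the convention of $M$ consistent between states, observations and actions. It has to be the same $\tau$ throughout, and the change of variables in the product must be a bijection of $\mathcal{I}$.
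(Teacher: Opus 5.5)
Your proposal is correct and follows the same route as the paper: the team-reward identity $R_{\mathrm{team}}(Ms,Ma)=R_{\mathrm{team}}(s,a)$ and the reindexing argument for $\pi^{\mathrm{tot}}_\theta(Ma\mid Ms)=\pi^{\mathrm{tot}}_\theta(a\mid s)$ are exactly the paper's, and invariance is then transferred to $V_\theta$ and $Q_\theta$. The paper only asserts the $V_\theta$ step from the invariance of the reward, transition and policy, so your trajectory-coupling (or Bellman fixed-point) argument and your change of variables for $Q_\theta$, which rightly also uses transition invariance, fill in details the paper leaves implicit.
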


\begin{theorem} \label{theorem_data_augmentation}
    Under the setting of H-MGs in Lemma~\ref{lemma_value_function}, for any rollout joint policy $\pi^{\mathrm{tot}}_{\theta_k}$, any collected sample $(s_t, o_t, a_t)$ and its permuted counterpart $(M s_t, M o_t, M a_t)$ yield the same policy gradient, i.e.,
    \begin{align}  \label{theorem_policy_gradient}
        f(s_t,a_t,o_t;\theta,\theta_k) = f(M s_t, M a_t, M o_t;\theta,\theta_k), \quad \forall M \in \mathcal{M}.
    \end{align}
\end{theorem}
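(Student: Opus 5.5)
The plan is to show that each of the three factors in $f$ is unchanged when the sample is replaced by its permuted counterpart: the importance ratio, the advantage, and the score $\nabla_\theta \log \pi^{\mathrm{tot}}_\theta(a_t|s_t)$. Everything rests on one fact, which I will establish first. For every parameter value $\theta'$ (in particular both $\theta$ and $\theta_k$), the shared-policy joint distribution is permutation invariant:
\begin{align*}
\pi^{\mathrm{tot}}_{\theta'}(Ma|Ms)=\pi^{\mathrm{tot}}_{\theta'}(a|s).
\end{align*}

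To prove this, write $M$ as a permutation $\sigma$ of $\mathcal{I}$, so that $(Ma)_i = a_{\sigma(i)}$. By the observation condition in Definition~\ref{definition:Homogeneous Markov game}, $o_i(Ms) = (Mo(s))_i = o_{\sigma(i)}(s)$. The factorization then gives
\begin{align*}
\pi^{\mathrm{tot}}_{\theta'}(Ma|Ms)=\prod_{i=1}^N \pi_{\theta'}(a_{\sigma(i)}|o_{\sigma(i)}(s)).
\end{align*}
Since $\sigma$ is a bijection, this is a reordering of the product $\prod_{j=1}^N \pi_{\theta'}(a_j|o_j(s))=\pi^{\mathrm{tot}}_{\theta'}(a|s)$. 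The key point is that the local policy $\pi_{\theta'}$ is the same function for every agent; without policy sharing the reindexing would not close.

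The three factors then follow in turn.
\begin{itemize}
\item \emph{Importance ratio.} Apply the invariance to numerator and denominator separately, once with $\theta$ and once with $\theta_k$.
\item \emph{Score.} The identity $\log\pi^{\mathrm{tot}}_\theta(Ma|Ms)=\log\pi^{\mathrm{tot}}_\theta(a|s)$ holds for all $\theta$ in an open set, so differentiating in $\theta$ gives equal gradients. This works because $M$ acts only on $(s,a)$ and does not depend on $\theta$.
\item \emph{Advantage.} Lemma~\ref{lemma_value_function}, applied to the rollout policy $\theta_k$, gives $Q_{\theta_k}(Ms,Ma)=Q_{\theta_k}(s,a)$ and $V_{\theta_k}(Ms)=V_{\theta_k}(s)$. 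Hence $A_{\theta_k}(Ms,Ma)=A_{\theta_k}(s,a)$.
\end{itemize}
Multiplying the three equalities yields the claimed identity for $f$, for every $M\in\mathcal{M}$.

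I expect the main obstacle to be bookkeeping rather than depth: getting the index convention for how $M$ acts on the stacked vectors right, so that $(Mo(s))_i$ pairs with $(Ma)_i$ under the same $\sigma$. Because $M$ is applied consistently to $s$, $o$ and $a$, the pairs $(o_i,a_i)$ are permuted jointly. This joint permutation is what makes the product reindexing valid, and I would state it explicitly.

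I would also note that the argument for the third argument of $f$ only uses $Mo_t=o(Ms_t)$, which holds by the observation condition. So $f$ evaluated at the permuted sample is well defined as the same expression in $(Ms_t,Ma_t)$.
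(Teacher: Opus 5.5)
Your proposal is correct and follows the paper's proof step for step. The paper also obtains permutation invariance of $\pi^{\mathrm{tot}}_\theta$ by reindexing the product via $o_j(Ms)=o_{m_j}(s)$, inside the proof of Lemma~\ref{lemma_value_function}. It then deduces invariance of the importance ratio, of the score (through coordinatewise difference quotients, which is your ``differentiate an identity holding for all $\theta$'' step), and of the advantage from Lemma~\ref{lemma_value_function}, and multiplies the three equalities.
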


The proofs of Lemma~\ref{lemma_value_function} and Theorem~\ref{theorem_data_augmentation} can be found in Appendix~\ref{appendix_proof}. Theorem~\ref{theorem_data_augmentation} clearly indicates that augmenting a batch of collected samples with their permuted counterparts leaves the policy-gradient direction unchanged, thereby providing theoretical justification for incorporating permutation-based data augmentation into our MARL framework.

Note that prior work has provided a preliminary investigation of permutation-based experience augmentation in off-policy MARL~\citep{ye2022improving}, where the theoretical analysis only focuses on the validity of permuted transitions and Q-function estimation, while empirical evaluation is limited to low-dimensional multi-agent benchmarks. In contrast, we extend permutation-based augmentation to on-policy MARL for high-dimensional multi-humanoid coordination and establish its theoretical validity at the policy-gradient level under the H-MG formulation. In addition, our method is orthogonal to other data augmentation techniques, such as symmetry-based augmentation~\citep{lin2020invariant,mittal2024symmetry}. We leave the integration of these approaches for future work.

\subsection{MASkillBlender}  \label{subsec:final_rl_framework}
We finally integrate the decentralized skill-blended control policy and permutation-based data augmentation into the overall MASkillBlender framework. Specifically, we adopt MAPPO~\citep{yu2022surprising} to train the shared decentralized high-level policy. During the training process, a centralized critic $V_{\phi}(s_t)$, parameterized by $\phi$, takes the global state as input to approximate the value function $V_{\theta}(s_t)$ associated with the current joint policy $\pi^{\mathrm{tot}}_\theta$. The centralized critic is used only for policy training, while each humanoid executes the shared policy independently using its local observation.

During policy optimization, for each minibatch $\mathcal{B}$ sampled from the rollout buffer, we select one or more permutations from $\mathcal{M}$. For each permutation $M$, we augment $\mathcal{B}$ with the permuted counterpart $(Ms_t, Mo_t, Ma_t)$ of each sample $(s_t, o_t, a_t)$. Since $V_{\theta}(s)$ is permutation invariant by Lemma~\ref{lemma_value_function}, the centralized critic $V_{\phi}$ is trained using the augmented minibatch, such that $V_{\phi}(s_t)$ and $V_{\phi}(Ms_t)$ are optimized to approximate the same return. The overall training procedure of MASkillBlender is summarized in Appendix~\ref{appendix_maskillblender_training}.

\section{Experiments}  \label{sec:experiments}
We conduct experiments to evaluate the effectiveness, robustness, and generality of MASkillBlender across different multi-humanoid coordination tasks. The experimental setup is introduced in Sec.~\ref{subsec:experimental_setup}. Main results are presented in Sec.~\ref{subsec:main_results}, followed by ablation studies in Sec.~\ref{subsec:ablation}, policy robustness evaluations in Sec.~\ref{subsec:policy_robustness}, and cross-simulator transfer in Sec.~\ref{subsec:sim2sim}. Additional experiments are provided in Sec.~\ref{subsec:additional_experiments}.

\begin{figure*}[t]
    \centering
    \includegraphics[width=\linewidth]{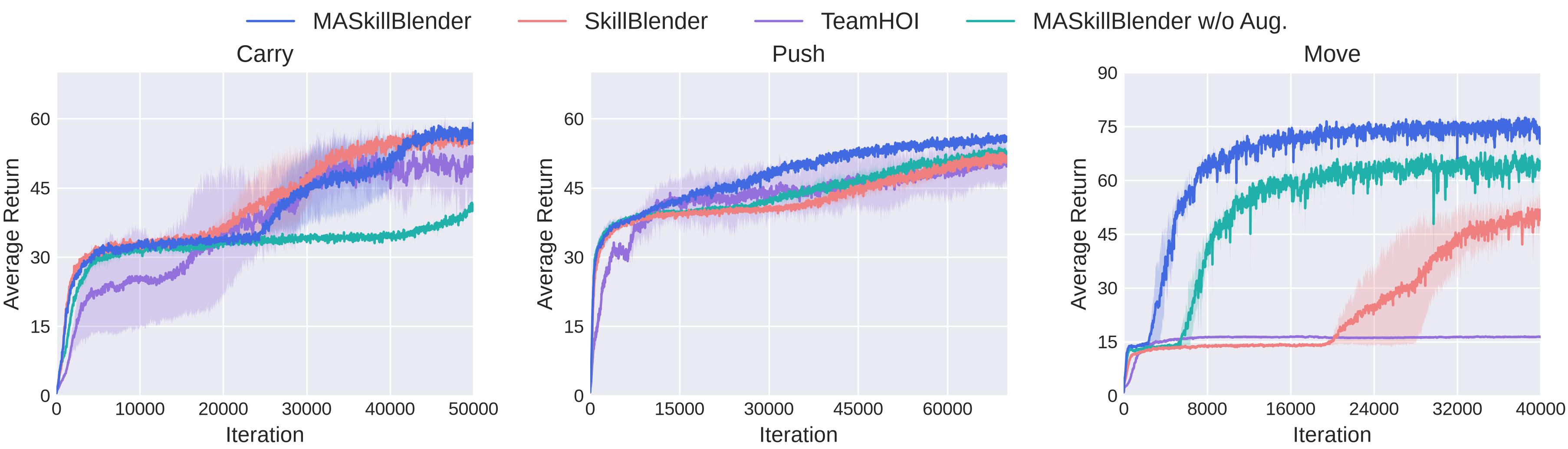}
    \caption{Training curves of all methods on three Unitree H1 tasks.}
    \label{fig:training_curves_h1}
\end{figure*}

\subsection{Experimental Setup}  \label{subsec:experimental_setup}

\paragraph{Tasks.}
Inspired by classic multi-robot coordination problems, we consider three multi-humanoid coordination tasks, namely \texttt{Carry}~\citep{gasbarrone2024decentralized}, \texttt{Push}~\citep{tang2024collaborative}, and \texttt{Move}~\citep{hu2022decentralized}. All tasks are implemented in SkillBench~\citep{kuang2025skillblender}, which is implemented in NVIDIA Isaac Gym~\citep{makoviychuk2021isaac}. In \texttt{Carry}, two humanoids collaboratively transport a large box to a target position in the air. In \texttt{Push}, two humanoids collaboratively push a heavy box toward a target position on the ground plane. Both tasks involve contact-rich interactions, where the humanoids are coupled through the motion of the shared object. In \texttt{Move}, three humanoids move toward their assigned target positions while avoiding inter-humanoid collisions. Detailed task specifications, including observations, actions and reward functions, are provided in Appendices~\ref{appendix_spaces} and~\ref{appendix_task_specifications}. For all three tasks, the local policy observation is deliberately kept compact, consisting only of proprioceptive information and the task-relevant information necessary for local decision making. In addition, each task is specified with only a few reward terms, all of which are directly associated with task completion. This reduces the need for extensive reward engineering. As shown in Fig.~\ref{fig:versatile_task}, each task supports two humanoid models, namely the 19-DoF Unitree H1 and the 21-DoF Unitree G1. Detailed specifications of these two humanoid models are provided in Appendix~\ref{appendix_model_details}.

\paragraph{Baselines and Implementation Details.}
We compare our method with SkillBlender~\citep{kuang2025skillblender} and TeamHOI~\citep{lionar2026teamhoi}. SkillBlender is originally designed for single-humanoid whole-body loco-manipulation. We adapt it to the multi-humanoid setting by using a centralized high-level policy that takes the joint observations of all humanoids as input and jointly predicts their skill goals and blending weights. Accordingly, the adapted SkillBlender can be regarded as a centralized counterpart to our method. Note that SkillBlender employs the same pre-trained primitive skills as our method in all tasks. TeamHOI is a reference-based method for cooperative multi-humanoid object transport. It trains a decentralized Transformer policy with PPO and employs a masked Adversarial Motion Prior (AMP) constructed from single-human reference motions to regularize whole-body behaviors during coordination. For fair comparison, we keep the task environments and task-level information consistent across methods whenever possible, while retaining the method-specific inputs required by each baseline. We provide the implementation details of our method and two baselines in Appendix~\ref{appendix_implementation_details}.

\paragraph{Evaluation Metrics.}
To comprehensively evaluate the learned policies, our metrics include the average episodic return, denoted by Return~($\uparrow$), and a set of task-specific evaluation metrics. For all methods, Return is computed using only the task rewards defined in our benchmark environments, excluding auxiliary rewards used during training. For both \texttt{Carry} and \texttt{Push}, the task metric is the distance between the final box position and the target position, denoted by Error$_{\mathrm{box}}$~($\downarrow$). For \texttt{Move}, we report the maximum distance between the humanoids' final root positions and their respective targets, as well as the cumulative heading error over the episode, denoted by Error$_{\mathrm{pos}}$~($\downarrow$) and Error$_{\mathrm{head}}$~($\downarrow$), respectively. Based on these metrics, we further define task-specific success criteria and report the corresponding success rate, denoted by Success~($\uparrow$). For \texttt{Carry}, an episode is considered successful if Error$_{\mathrm{box}} \leq 0.1$. For \texttt{Push}, an episode is considered successful if Error$_{\mathrm{box}} \leq 0.2$. For \texttt{Move}, an episode is considered successful if both Error$_{\mathrm{pos}} \leq 0.03$ and no inter-humanoid collision occurs during the episode.

\begin{table*}[t]
\centering
\scriptsize
\setlength{\tabcolsep}{3pt}
\renewcommand{\arraystretch}{0.90}

\caption{Quantitative comparison of MASkillBlender with SkillBlender and TeamHOI on the Unitree H1 tasks.}
\label{tab:main_results}

\begin{tabular}{C{1.3cm}C{1.8cm}C{2.4cm}C{2.4cm}C{2.4cm}}
\toprule
Task & Metric & MASkillBlender & SkillBlender & TeamHOI \\
\midrule

\multirow{3}{*}{\texttt{Carry}}
& Return~($\uparrow$)
& \textbf{67.928 $\pm$ 1.563}
& 65.855 $\pm$ 8.343
& 67.870 $\pm$ 9.935 \\

& Error$_{\mathrm{box}}$~($\downarrow$)
& \textbf{0.016 $\pm$ 0.008}
& 0.025 $\pm$ 0.050
& 0.029 $\pm$ 0.102 \\

& Success~($\uparrow$)
& \textbf{100.0\%}
& 98.0\%
& 96.0\% \\
\cmidrule(lr){1-5}

\multirow{3}{*}{\texttt{Push}}
& Return~($\uparrow$)
& \textbf{61.137 $\pm$ 5.400}
& 57.550 $\pm$ 5.011
& 57.991 $\pm$ 5.024 \\

& Error$_{\mathrm{box}}$~($\downarrow$)
& \textbf{0.116 $\pm$ 0.063}
& 0.138 $\pm$ 0.069
& 0.136 $\pm$ 0.075 \\

& Success~($\uparrow$)
& \textbf{86.0\%}
& 84.0\%
& 78.0\% \\
\cmidrule(lr){1-5}

\multirow{4}{*}{\texttt{Move}}
& Return~($\uparrow$)
& \textbf{90.622 $\pm$ 1.022}
& 71.190 $\pm$ 3.211
& 16.723 $\pm$ 0.256 \\

& Error$_{\mathrm{pos}}$~($\downarrow$)
& \textbf{0.010 $\pm$ 0.003}
& 0.043 $\pm$ 0.012
& 2.755 $\pm$ 0.329 \\

& Error$_{\mathrm{head}}$~($\downarrow$)
& 22.109 $\pm$ 6.552
& 223.139 $\pm$ 84.019
& \textbf{3.822 $\pm$ 3.142} \\

& Success~($\uparrow$)
& \textbf{100.0\%}
& 10.0\%
& 0.0\% \\
\bottomrule
\end{tabular}
\end{table*}

\begin{figure*}[t]
    \centering
    \includegraphics[width=\textwidth]{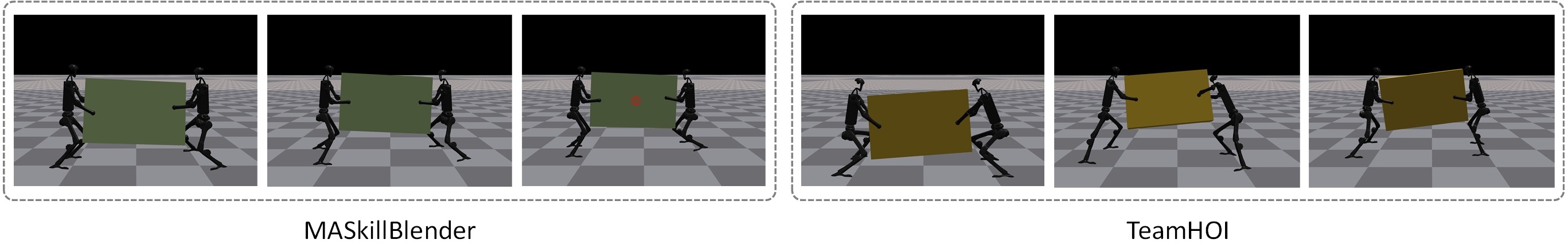}
    \caption{Qualitative comparison of MASkillBlender and TeamHOI on the Unitree H1 \texttt{Carry} task. Representative key frames from both methods are shown to illustrate the whole-body coordination behaviors during box lifting and transportation. }
    \label{fig:carry_qualitative_comparison}
\end{figure*}

\subsection{Main Results}  \label{subsec:main_results}
We evaluate all methods on the Unitree H1 using two independent trials per task, with the smoothed learning curves shown in Fig.~\ref{fig:training_curves_h1}. MASkillBlender achieves the highest final returns across all three tasks. TeamHOI shows large variation on \texttt{Carry} and \texttt{Push} and fails to learn \texttt{Move}. SkillBlender remains competitive on \texttt{Carry}, but shows reduced performance on \texttt{Push} and a substantial degradation on \texttt{Move}. This task-dependent gap suggests that directly extending single-humanoid skill blending to joint multi-humanoid control does not consistently provide effective coordination across tasks. For quantitative evaluation, we perform 50 rollouts with the best policy learned by each method, and the results are reported in Table~\ref{tab:main_results}. MASkillBlender achieves the highest success rate on all three tasks. The lower success rate on \texttt{Push} is mainly due to the heavy box and complex contact interactions among the humanoids, the box, and the ground. TeamHOI obtains a lower Error$_{\mathrm{head}}$ on \texttt{Move} because it learns to orient toward the target but fails to reach the target position. Fig.~\ref{fig:carry_qualitative_comparison} further provides a qualitative comparison on the \texttt{Carry} task. MASkillBlender achieves coordinated box transportation, whereas TeamHOI tends to move the box toward the target before establishing coordinated locomotion.

\subsection{Ablation Studies}  \label{subsec:ablation}
To assess the contribution of the key components in MASkillBlender, we conduct ablation studies on permutation-based data augmentation and primitive skill selection. We first remove permutation-based data augmentation while keeping all other settings unchanged. As shown in Fig.~\ref{fig:training_curves_h1}, removing the augmentation generally slows down learning and leads to lower final returns. The quantitative results in Table~\ref{tab:permutation_ablation} of Appendix~\ref{appendix_ablation_studies} show consistent performance drops across all three tasks. These results demonstrate the effectiveness of our data augmentation strategy. We further examine the contribution of the primitive skills by removing \texttt{Squatting} from \texttt{Carry} and \texttt{Walking} from \texttt{Push}. As shown in Fig.~\ref{fig:primitive_skill_ablation} of Appendix~\ref{appendix_ablation_studies}, both variants exhibit degraded learning performance, highlighting the importance of task-relevant primitive skills for coordinated whole-body control.

\subsection{Policy Robustness under Deployment Variations}  \label{subsec:policy_robustness}
To assess deployment robustness, we further evaluate MASkillBlender under several practical variations. Detailed experimental settings are provided in Appendix~\ref{appendix_policy_robustness}. First, we introduce localization errors, asynchronous control delays, and inter-robot communication delays into the task environments. Without additional training or fine-tuning, the learned policies maintain effective coordination across all three tasks. Then, we evaluate long-horizon execution from asymmetric initial configurations on \texttt{Carry} and \texttt{Push}. By decomposing the tasks and reusing pre-trained primitive skills, the learned coordination policies can be directly extended to these more complex initial configurations without additional policy training. Note that a brief discussion of the deployment advantages of our task-decomposition strategy is also provided in Appendix~\ref{appendix_policy_robustness}, where its reduced reliance on inter-robot communication helps mitigate the impact of communication delays.

\subsection{Sim2Sim Transfer}  \label{subsec:sim2sim}
To evaluate Sim2Sim transfer, we deploy MASkillBlender from Isaac Gym to MuJoCo~\citep{todorov2012mujoco} on \texttt{Carry} and \texttt{Move}. For \texttt{Carry}, the policy is briefly trained under broader domain randomization before deployment, while \texttt{Move} is transferred directly without additional training. MASkillBlender preserves strong task performance on both tasks after transfer to MuJoCo, demonstrating effective cross-simulator transfer. Detailed settings and results are provided in Appendix~\ref{appendix_sim2sim_transfer}.

\subsection{Additional Multi-Humanoid Coordination Experiments}  \label{subsec:additional_experiments}
To evaluate the versatility of MASkillBlender, we conduct additional experiments on different object-manipulation scenarios and humanoid embodiments. We introduce a two-humanoid box-exchange task on Unitree H1 to extend the evaluation beyond the manipulation of large shared objects. The results show that MASkillBlender can support coordinated manipulation involving smaller objects. We also evaluate MASkillBlender on the 21-DoF Unitree G1 across \texttt{Carry}, \texttt{Push}, and \texttt{Move}. The results suggest that MASkillBlender generalizes across different humanoid embodiments without substantial modifications. Detailed settings and results are provided in Appendices~\ref{appendix_box_exchange} and~\ref{appendix_g1_experiments}.

\section{Conclusion and Discussion}  \label{Conclusion and Discussion}
In this paper, we presented MASkillBlender, a general MARL framework for decentralized multi-humanoid coordination. By building a shared decentralized high-level policy over reusable pre-trained primitive skills, MASkillBlender focuses downstream learning on high-level skill composition and inter-agent coordination using local observations and task-level rewards. We further incorporated permutation-based data augmentation for homogeneous humanoid teams and analyzed its effect on policy optimization under the H-MG formulation. Extensive simulated experiments demonstrate the versatility of MASkillBlender across diverse multi-humanoid coordination tasks and humanoid embodiments, as well as its robustness under deployment variations and cross-simulator transfer.

Despite these encouraging results, several limitations remain. First, the current formulation assumes homogeneous humanoid teams and relies on a shared local policy. Extending the framework to heterogeneous teams with different morphologies, capabilities, or functional roles remains an important direction for future work. Second, the high-level policies are still trained and evaluated primarily in simulation with state-based observations. Although we evaluate deployment variations and Sim2Sim transfer, real-world multi-humanoid deployment still requires addressing onboard perception, state estimation, contact-rich sim-to-real transfer, and hardware safety. Third, the performance of MASkillBlender depends on the quality and coverage of the primitive skill library. Incorporating richer or automatically discovered skills may further broaden the range of multi-humanoid tasks that can be solved. Overall, we hope that this work can serve as a useful step toward general learning frameworks for coordinated humanoid robot teams.

\bibliography{reference}

\newpage
\appendix

\begin{center}
    {\Large\bfseries Appendix}
\end{center}

\section{Notations} \label{appendix_notation}
Denote $\mathcal{I} = \{1, \dots, N\}$. Denote $\mathbb{R}$ and $\mathbb{R}^m$ as the spaces of real numbers and $m$-dimensional real vectors, respectively. Let $x = (x_1, \dots, x_N)$ be an ordered list. For any ordered list with $N$ elements, we define a permutation $M = [m_1, \dots, m_N]$ such that $m_i \neq m_j$ for all $i \neq j$ and $m_k \in \mathcal{I}$ for all $k \in \mathcal{I}$. Let $\mathcal{M}$ denote the set of all such permutations. The permuted list of $x$ under $M$ is given by $Mx = (x_{m_1}, \dots, x_{m_N})$. For any vector $z = [z_1, \dots, z_m]^T \in \mathbb{R}^m$, we define the pointwise Softmax mapping $\mathrm{softmax}_c: \mathbb{R}^m \to \mathbb{R}$ by $\mathrm{softmax}_c(z) = \frac{e^{z_c}}{\sum_{i=1}^m e^{z_i}}$, where $c \in \{1, \dots, m\}$.

\section{An Example of H-MGs} \label{appendix_example}
As shown in Fig.~\ref{fig:versatile_task}, two humanoid robots collaboratively push a box to a target position. The target position is randomly initialized at the start of each episode and remains fixed for the duration of the episode. This task can be cast as an H-MG. Specifically, the local state $s_i$ consists of humanoid $i$'s proprioceptive state (e.g., its root and joint states), the root state of the box, and the task goal (i.e., the target box position). Since the humanoids are homogeneous, it holds that $\mathcal{S}_1 = \mathcal{S}_2$. Furthermore, the global state $s$ can be reconstructed by concatenating $s_1$ and $s_2$. The local action is the target joint position of each humanoid, and thus $\mathcal{A}_1 = \mathcal{A}_2$. This task contains a single team reward, which is composed of two sub-rewards. The first sub-reward depends on the distance between the box position and the target position. For the second sub-reward, we define two contact points on the box and assign each humanoid a unique contact point based on distance. The second sub-reward then depends on the distances between the humanoid's wrists and its assigned contact point. Since both sub-rewards are invariant to the ordering of the humanoids, the overall reward function is permutation invariant. The transition function is also permutation invariant because the underlying dynamics depend only on physics. Finally, the observation of each humanoid $i$ consists of its proprioceptive state, the relative position between its target and the box, and the relative positions between its wrists and its assigned contact point. Therefore, the joint observation is permutation-preserving.

\section{Primitive Skill Specifications}  \label{appendix_description_skills}

In our experiments, we consider three primitive skills, which are \texttt{Walking}, \texttt{Reaching}, and \texttt{Squatting}. Note that the primitive skills for the Unitree H1 are directly obtained from the project \citet{skillblender_github}, whereas those for the Unitree G1 are trained in-house.

\noindent$\bullet$\hspace{0.4em}\texttt{Walking}: This skill enables the humanoid to track commanded base motions for locomotion. Its goal is specified by the target planar linear velocity together with the target yaw angular velocity.

\noindent$\bullet$\hspace{0.4em}\texttt{Reaching}: This skill supports upper-body manipulation by driving the humanoid's two wrists toward designated 3D targets while keeping the body grounded. Its goal is defined as the relative offset from each wrist to its corresponding target.

\noindent$\bullet$\hspace{0.4em}\texttt{Squatting}: This skill modulates the humanoid's vertical posture by lowering or raising the body to a desired base height. Its goal is given by the relative height error between the current root and the target root height.

More detailed specifications of these primitive skills can be found in Appendix~C of~\citet{kuang2025skillblender}.

\section{Proofs} \label{appendix_proof}

\textbf{Proof of Lemma~\ref{lemma_value_function}.} Recall that the joint reward function $R(s,a) = (R_1(s,a), \dots, R_N(s,a))$ in H-MGs is permutation preserving as stated in Definition~\ref{definition:Homogeneous Markov game}. Since all agents share a team reward $R_{\mathrm{team}}$, it follows that $R_{\mathrm{team}}(Ms,Ma) = R_{\mathrm{team}}(s,a)$, which implies that $R_{\mathrm{team}}$ is permutation invariant. In addition, due to the permutation-preserving property of the joint observation, we have $o_j(Ms) = o_{m_j}(s)$ for all $j \in \mathcal{I}$ and any permutation $M = [m_1, \dots, m_N] \in \mathcal{M}$. Therefore, for any parameter $\theta$, it holds
\begin{align}  \label{lemma_formula_1}
    \pi^\mathrm{tot}_{\theta}(a|s) = \prod_{i=1}^N \pi_{\theta}(a_i|o_i(s)) = \prod_{j=1}^N \pi_{\theta}(a_{m_j}|o_{m_j}(s)) = \pi^\mathrm{tot}_{\theta}(Ma|Ms),
\end{align}
where the second equality follows from $\pi_{\theta}(a_i|o_i(s)) = \pi_{\theta}(a_{m_j}|o_{m_j}(s))$ for $i = m_j$ and all $i \in \mathcal{I}$. Equation~(\ref{lemma_formula_1}) indicates that the joint policy $\pi^\mathrm{tot}_{\theta}$ is permutation invariant. As a result, given the permutation invariance of the transition function, the reward function, and the joint policy, we have $V_{\theta}(s) = V_{\theta}(Ms)$ for all $M \in \mathcal{M}$. Furthermore, $Q_{\theta}(s,a) = Q_{\theta}(Ms,Ma)$ follows from the permutation invariance of the reward function and the value function.

\textbf{Proof of Theorem~\ref{theorem_data_augmentation}.} Recall that $\pi^\mathrm{tot}_{\theta}$ is permutation invariant for any parameter $\theta$ and permutation $M$, which implies
\begin{align}
    \frac{\pi^{\mathrm{tot}}_\theta(a_t|s_t)}{\pi^{\mathrm{tot}}_{\theta_{k}}(a_t|s_t)}
    &= \frac{\pi^{\mathrm{tot}}_\theta(M a_t|M s_t)}{\pi^{\mathrm{tot}}_{\theta_{k}}(M a_t|M s_t)}, \label{theorem_formula_1} \\
    \log \pi^{\mathrm{tot}}_{\theta}(a_t | s_t)
    &= \log \pi^{\mathrm{tot}}_{\theta}(M a_t | M s_t). \label{theorem_formula_2}
\end{align}
Denote $\theta = [\theta_1, \dots, \theta_p]^T \in \mathbb{R}^p$, and let $e_i$ be the $i$-th standard basis vector in $\mathbb{R}^p$. Then, for any $i \in \{1,\ldots,p\}$, it holds that
\begin{align}
      & \frac{\partial}{\partial \theta_i}\log \pi^{\mathrm{tot}}_{\theta}(a_t | s_t)  \notag \\
    = & \lim_{h \to 0} \frac{\log \pi^{\mathrm{tot}}_{\theta + h  e_i}(a_t | s_t) - \log \pi^{\mathrm{tot}}_{\theta}(a_t | s_t)}{h}  \notag \\
    = & \lim_{h \to 0} \frac{\log \pi^{\mathrm{tot}}_{\theta + h  e_i}(M a_t | M s_t) - \log \pi^{\mathrm{tot}}_{\theta}(M a_t | M s_t)}{h}  \notag \\
    = & \frac{\partial}{\partial \theta_i}\log \pi^{\mathrm{tot}}_{\theta}(M a_t | M s_t), \label{theorem_formula_3}
\end{align}
where $h \in \mathbb{R}$. Since (\ref{theorem_formula_3}) holds for all $i \in \{1,\ldots,p\}$, we obtain
\begin{align} \label{theorem_formula_4}
    \nabla_{\theta}\log \pi^{\mathrm{tot}}_{\theta}(a_t | s_t) = \nabla_{\theta}\log \pi^{\mathrm{tot}}_{\theta}(M a_t | M s_t).
\end{align}
In addition, since both $V_{\theta}(s)$ and $Q_{\theta}(s,a)$ are permutation invariant, it follows that
\begin{align} \label{theorem_formula_5}
    A_{\theta}(s_t, a_t) = A_{\theta}(M s_t, M a_t).
\end{align}
Combining (\ref{theorem_formula_1}), (\ref{theorem_formula_4}) and (\ref{theorem_formula_5}), we obtain (\ref{theorem_policy_gradient}), which completes the proof.

\section{Training Procedure of MASkillBlender} \label{appendix_maskillblender_training}
The detailed training procedure of MASkillBlender is summarized in Algorithm~\ref{alg:MASkillBlender}.

\begin{algorithm}[t]
\caption{MASkillBlender}
\label{alg:MASkillBlender}
\begin{algorithmic}[1]
\For{iteration $p=1,2,\ldots,P$}
    \For{environment $l=1,2,\ldots,L$ \textbf{in parallel}}
        \State All agents $i \in \mathcal{I}$ execute the decentralized policy $\pi_{\theta_{\mathrm{old}}}$ in environment $l$ for $T$ timesteps
        \State Compute advantage estimates $\hat{A}_1,\ldots,\hat{A}_T$ from collected trajectories
    \EndFor
    \For{epoch $k=1,2,\ldots,K$}
        \State Sample a minibatch $\mathcal{B}$ from the rollout buffer
        \State Apply permutation-based data augmentation to $\mathcal{B}$
        \State Update policy parameters $\theta$ and value parameters $\phi$ with MAPPO
    \EndFor
    \State $\theta_{\mathrm{old}} \gets \theta$
\EndFor
\end{algorithmic}
\end{algorithm}

\section{Experiment Details} \label{appendix_Experiment_Details}

\subsection{Observations and Actions}  \label{appendix_spaces}

The local observation of each humanoid $i$ consists of the proprioceptive state $s_{i,t}^{\mathrm{prop}}$ and the local goal $g_{i,t}$. Across all tasks, $s_{i,t}^{\mathrm{prop}}$ includes the joint angles, joint velocities, previous actions, base angular velocity, and base Euler angles. Let $d_{\mathrm{task}}$ denote the dimensionality of $g_{i,t}$, and recall that $d$ is the number of DoFs of the humanoid. The resulting local observation dimension for each humanoid is $3d + 6 + d_{\mathrm{task}}$.

Recall that the dimensionality of the skill-dependent goal vector for primitive skill $c$ is $d_c$. Therefore, the output dimensionality of the decentralized high-level policy is $\sum_{c=1}^C (d + d_c)$. Following existing works~\citep{kuang2025skillblender,he2024h2o,he2024omnih2o,zhang2024wococo}, both the hierarchical policy and the low-level skills output whole-body target joint positions for the humanoid, which are then converted into torques via a PD controller.

\subsection{Task Specifications}  \label{appendix_task_specifications}

\noindent$\bullet$\hspace{0.4em}\texttt{Carry}: The objective is to coordinate two humanoids to collaboratively carry a large box to a target position. The goal of each humanoid consists of the relative displacement between the box position and the target position, as well as the relative displacements between the humanoid's wrists and its assigned box-side reference point. The selected primitive skills are \texttt{Walking}, \texttt{Reaching}, and \texttt{Squatting}. The reward is defined as
\begin{equation}
    R_{\mathrm{team}}(s, a) = \omega_1 \cdot e^{-4 \Vert p_{\mathrm{box}} - p_{\mathrm{goal}} \Vert_1} + \omega_2 \cdot e^{-4 \|q_{\mathrm{wb}}\|_{\infty}},
\end{equation}
where $p_{\mathrm{box}} \in \mathbb{R}^3$ and $p_{\mathrm{goal}} \in \mathbb{R}^3$ are the current box position and the target box position, respectively, $q_{\mathrm{wb}} \in \mathbb{R}^{12}$ is the concatenated wrist-to-box-reference relative offsets of the two humanoids, and $\{\omega_i\}_{i=1}^2$ are weighting coefficients.

\noindent$\bullet$\hspace{0.4em}\texttt{Push}: The objective is to coordinate two humanoids to collaboratively push a large and heavy box to a target position on the plane. The goal of each humanoid consists of the relative displacement between the box position and the target position, as well as the relative displacements between the humanoid's wrists and its assigned box-side reference point. The selected primitive skills are \texttt{Walking} and \texttt{Reaching}. The reward is defined as
\begin{equation}
    R_{\mathrm{team}}(s, a) = \omega_1 \cdot e^{-4 \Vert p_{\mathrm{box}} - p_{\mathrm{goal}} \Vert_1} + \omega_2 \cdot e^{-4 \|q_{\mathrm{wb}}\|_{\infty}},
\end{equation}
where $p_{\mathrm{box}} \in \mathbb{R}^3$ and $p_{\mathrm{goal}} \in \mathbb{R}^3$ are the current box position and the target box position, respectively, $q_{\mathrm{wb}} \in \mathbb{R}^{12}$ is the concatenated relative offsets from the wrists of the two humanoids to their assigned box-side reference points, and $\{\omega_i\}_{i=1}^2$ are weighting coefficients.

\noindent$\bullet$\hspace{0.4em}\texttt{Move}: The objective is to coordinate three humanoids to move to their assigned target positions while avoiding collisions with the other humanoids. The goal of each humanoid consists of the relative displacement between its current root position and its target position, as well as the relative planar position of its closest neighboring humanoid. The selected primitive skill is \texttt{Walking}. The reward is defined as
\begin{equation}
    R_{\mathrm{team}}(s, a) = \omega_1 \cdot e^{-4 \bar{d}_{\mathrm{tar}}} + \omega_2 \cdot e^{-4 \bar{d}_{\mathrm{col}}} + \omega_3 \cdot e^{-4 \bar{e}_{\mathrm{head}}},
\end{equation}
where
\begin{equation}
    \bar{d}_{\mathrm{tar}} = \frac{1}{9} \sum_{i=1}^{3} \left\| p_i - p_i^{\star} \right\|_1,\quad
    \bar{d}_{\mathrm{col}} = \sum_{i=1}^{3} \max(0, 0.8 - \|c_i\|_2),\quad
    \bar{e}_{\mathrm{head}} = \frac{1}{3} \sum_{i=1}^{3} e_i^{\mathrm{head}}.
\end{equation}
Here, $p_i \in \mathbb{R}^3$ and $p_i^{\star} \in \mathbb{R}^3$ denote the current root position and the target position of humanoid $i$, respectively, $c_i \in \mathbb{R}^2$ denotes the relative planar position of the closest neighboring humanoid to humanoid $i$, $e_i^{\mathrm{head}}$ denotes the heading error of humanoid $i$ w.r.t. its target direction, and $\{\omega_i\}_{i=1}^3$ are weighting coefficients.

A comparison of three tasks is summarized in Table~\ref{tab:task_settings}. It is worth noting that each task relies on only a few reward terms, all of which are directly related to task completion. This design avoids extensive reward engineering.

\begin{table}[t]
\centering
\small
\setlength{\tabcolsep}{4pt}
\caption{Task settings.}
\label{tab:task_settings}
\begin{tabular}{C{1.5cm}C{1.7cm}C{1.9cm}C{1.4cm}C{1.5cm}C{1.6cm}}
\toprule
\multirow{2}{*}[-0.8ex]{Task}
& \multirow{2}{*}[-0.8ex]{\makecell[c]{Number of\\Humanoids}}
& \multirow{2}{*}[-0.8ex]{\makecell[c]{Number of\\Reward Terms}}
& \multicolumn{3}{c}{Selected Primitive Skills} \\
\cmidrule(lr){4-6}
& & & \texttt{Walking} & \texttt{Reaching} & \texttt{Squatting} \\
\midrule
\texttt{Carry} & 2 & 2 & $\checkmark$ & $\checkmark$ & $\checkmark$ \\
\texttt{Push}  & 2 & 2 & $\checkmark$ & $\checkmark$ &              \\
\texttt{Move}  & 3 & 3 & $\checkmark$ &              &              \\
\bottomrule
\end{tabular}
\end{table}

\subsection{Humanoid Model Details}  \label{appendix_model_details}
We consider two Unitree humanoid models in our experiments, namely the 19-DoF Unitree H1 and the 21-DoF Unitree G1. Note that the hand links are excluded for both models, and only the main upper-body and lower-body links relevant to locomotion and whole-body manipulation are retained. The corresponding embodiment details are summarized below:

\noindent$\bullet$\hspace{0.4em}\textbf{19-DoF Unitree H1}: This humanoid embodiment is the same as that used in~\citet{kuang2025skillblender}. It includes two 3-DoF shoulders, two 1-DoF elbows, one 1-DoF torso joint, two 3-DoF hips, two 1-DoF knees, and two 1-DoF ankle joints.

\noindent$\bullet$\hspace{0.4em}\textbf{21-DoF Unitree G1}: This humanoid embodiment is a modified 21-DoF Unitree G1 model derived from the official 23-DoF configuration file~\citep{unitree_ros}. It includes two 3-DoF shoulders, two 1-DoF elbows, one 1-DoF torso joint, two 3-DoF hips, two 1-DoF knees, and two 2-DoF ankle joints. In addition, we attach a virtual joint named \texttt{wrist\_target\_joint} to each arm to conveniently query the wrist position for task construction. These virtual joints are introduced only for kinematic querying and are not treated as actuated joints in control.

\begin{table}[t]
\centering
\small
\caption{Network hyperparameters of MASkillBlender for three Unitree H1 tasks.}
\label{tab:network_hyperparameters}
\begin{tabular}{C{2.3cm}C{3.9cm}C{3.9cm}}
\toprule
Task & Actor hidden dimensions & Critic hidden dimensions \\
\midrule
\texttt{Carry} & [512, 256, 128] & [768, 256, 128] \\
\texttt{Push} & [512, 256, 128] & [1024, 512, 128] \\
\texttt{Move} & [512, 256, 128] & [768, 256, 128] \\
\bottomrule
\end{tabular}
\end{table}

\subsection{Implementation Details}  \label{appendix_implementation_details}

\paragraph{MASkillBlender and SkillBlender.}
For all primitive skills, the policies are parameterized as end-to-end MLPs with three hidden layers of sizes [512, 256, 128]. Note that these primitive skill policies are pre-trained and kept fixed when training the high-level policy. For both methods, both the actor and critic are parameterized as MLPs with three hidden layers. Following~\citet{kuang2025skillblender}, the critic network takes privileged observations of all humanoids as input, including base linear velocity, body mass, and contact masks. For MASkillBlender, the network hyperparameters for all three tasks with the Unitree H1 are summarized in Table~\ref{tab:network_hyperparameters}. SkillBlender employs a larger actor network with hidden-layer sizes [768, 256, 128] for all tasks. In addition, a larger critic network with hidden-layer sizes [1024, 512, 128] is employed in MASkillBlender for all tasks with the Unitree G1. For the RL training hyperparameters, we set the entropy coefficient to $0.001$, the learning rate to $1 \times 10^{-5}$, the number of learning epochs to $2$, the discount factor to $\gamma = 0.994$, the GAE parameter to $\lambda = 0.9$, and the number of mini-batches to $4$. This hyperparameter setting is used for all tasks across all embodiments. For \texttt{Carry} and \texttt{Push}, we set 4096 parallel environments during training. For \texttt{Move}, we set 2048 parallel environments. In all tasks, MASkillBlender uses data generated from all permutations for policy training.

\paragraph{TeamHOI.}
We adapt TeamHOI to the Unitree H1 while preserving its core reference-based PPO and AMP training paradigm. Since the original TeamHOI is developed with a simplified 28-DoF humanoid, its motion references cannot be directly applied to the 19-DoF H1 used in our tasks. We therefore retarget the selected human motion references to the H1 embodiment using General Motion Retargeting (GMR)~\citep{araujo2026retargeting}. The retargeting process accounts for the humanoid's kinematic structure and joint limits, and converts the reference motions into H1-compatible root poses and joint trajectories for subsequent AMP training. For \texttt{Carry}, we select three pickup motions and two sideways locomotion motions from the motion data provided by TeamHOI and CooHOI, covering the main behaviors required for lifting and transporting the box. For \texttt{Push}, since no suitable pushing motion is available in the original TeamHOI motion set, we select two heavy-object pushing sequences from the CMU Motion Capture Database through AMASS~\citep{mahmood2019amass}. We further adjust their upper-body posture to obtain higher and more suitable hand-object interaction configurations before retargeting them to H1. For \texttt{Move}, we use locomotion references covering straight walking, stand-to-walk and walk-to-stand transitions, as well as mild left/right turns. Unlike the original TeamHOI, which switches between Full and Masked AMP according to the agent-object distance, our tasks initialize the humanoids directly near the interaction region. Therefore, we use either Full AMP or Masked AMP throughout training for each task. Based on empirical training performance, \texttt{Carry} and \texttt{Move} use Full AMP, while \texttt{Push} uses Masked AMP. We retain the shared decentralized Transformer policy of TeamHOI while adapting its observation interface to our benchmark tasks. For \texttt{Carry} and \texttt{Push}, each humanoid observes its proprioceptive state, task goal, and wrist-to-object interaction geometry, without explicit teammate information. For \texttt{Move}, the local observation additionally includes the relative position of the closest teammate for collision avoidance. Thus, the adapted TeamHOI policy uses the same task-level observation information as MASkillBlender for each task. Each observation component is encoded into a 64-D token and processed by three Transformer encoder layers with two attention heads and a feed-forward dimension of 512. The actor head outputs a 19-D action for the H1 humanoid, and the same architecture is used across all three tasks with task-dependent input dimensions. For task rewards, we replace the original TeamHOI transport-specific reward design with the reward functions defined in our benchmark environments, while retaining the AMP reward for motion regularization. The task reward and AMP reward are combined with equal weights during training. We retain the main PPO hyperparameters of the original implementation, including the learning rate, clipping ratio, value loss coefficient, discount factor, and GAE parameter, while adapting the rollout and optimization schedule to our training framework.

\paragraph{Training.}
We primarily employ NVIDIA RTX 5090 GPUs for policy training, with each training or inference run using a single GPU. For MASkillBlender on the Unitree H1, a single trial takes approximately 50 hours for \texttt{Carry}, 76 hours for \texttt{Push}, and 35 hours for \texttt{Move}. For the adapted TeamHOI baseline, the corresponding training times are approximately 60, 84, and 55 hours, respectively. Despite the larger number of entities involved in our task environments, all tasks remain trainable on a mid-range workstation equipped with a single NVIDIA RTX 4080 GPU.

\subsection{Ablation Studies}  \label{appendix_ablation_studies}
\paragraph{Permutation-Based Data Augmentation.} The quantitative results of the ablation on permutation-based data augmentation are provided in Table~\ref{tab:permutation_ablation}.

\paragraph{Primitive Skill Selection.} The learning curves of the primitive skill selection ablation are provided in Fig.~\ref{fig:primitive_skill_ablation}.

\begin{table*}[t]
\centering
\small

\caption{Ablation study of permutation-based data augmentation on the Unitree H1 tasks.}
\label{tab:permutation_ablation}

\begin{tabular}{C{1.4cm}C{2.1cm}C{3.1cm}C{3.5cm}}
\toprule
Task & Metric & MASkillBlender & MASkillBlender w/o Aug. \\
\midrule

\multirow{3}{*}{\texttt{Carry}}
& Return~($\uparrow$)
& \textbf{67.928 $\pm$ 1.563}
& 45.791 $\pm$ 5.684 \\

& Error$_{\mathrm{box}}$~($\downarrow$)
& \textbf{0.016 $\pm$ 0.008}
& 0.179 $\pm$ 0.068 \\

& Success~($\uparrow$)
& \textbf{100.0\%}
& 6.0\% \\

\cmidrule(lr){1-4}

\multirow{3}{*}{\texttt{Push}}
& Return~($\uparrow$)
& \textbf{61.137 $\pm$ 5.400}
& 57.130 $\pm$ 9.322 \\

& Error$_{\mathrm{box}}$~($\downarrow$)
& \textbf{0.116 $\pm$ 0.063}
& 0.147 $\pm$ 0.098 \\

& Success~($\uparrow$)
& \textbf{86.0\%}
& 76.0\% \\

\cmidrule(lr){1-4}

\multirow{4}{*}{\texttt{Move}}
& Return~($\uparrow$)
& \textbf{90.622 $\pm$ 1.022}
& 84.460 $\pm$ 1.147 \\

& Error$_{\mathrm{pos}}$~($\downarrow$)
& \textbf{0.010 $\pm$ 0.003}
& 0.023 $\pm$ 0.006 \\

& Error$_{\mathrm{head}}$~($\downarrow$)
& \textbf{22.109 $\pm$ 6.552}
& 73.530 $\pm$ 23.160 \\

& Success~($\uparrow$)
& \textbf{100.0\%}
& 86.0\% \\

\bottomrule
\end{tabular}

\end{table*}

\begin{figure*}[t]
    \centering
    \includegraphics[width=0.8\textwidth]{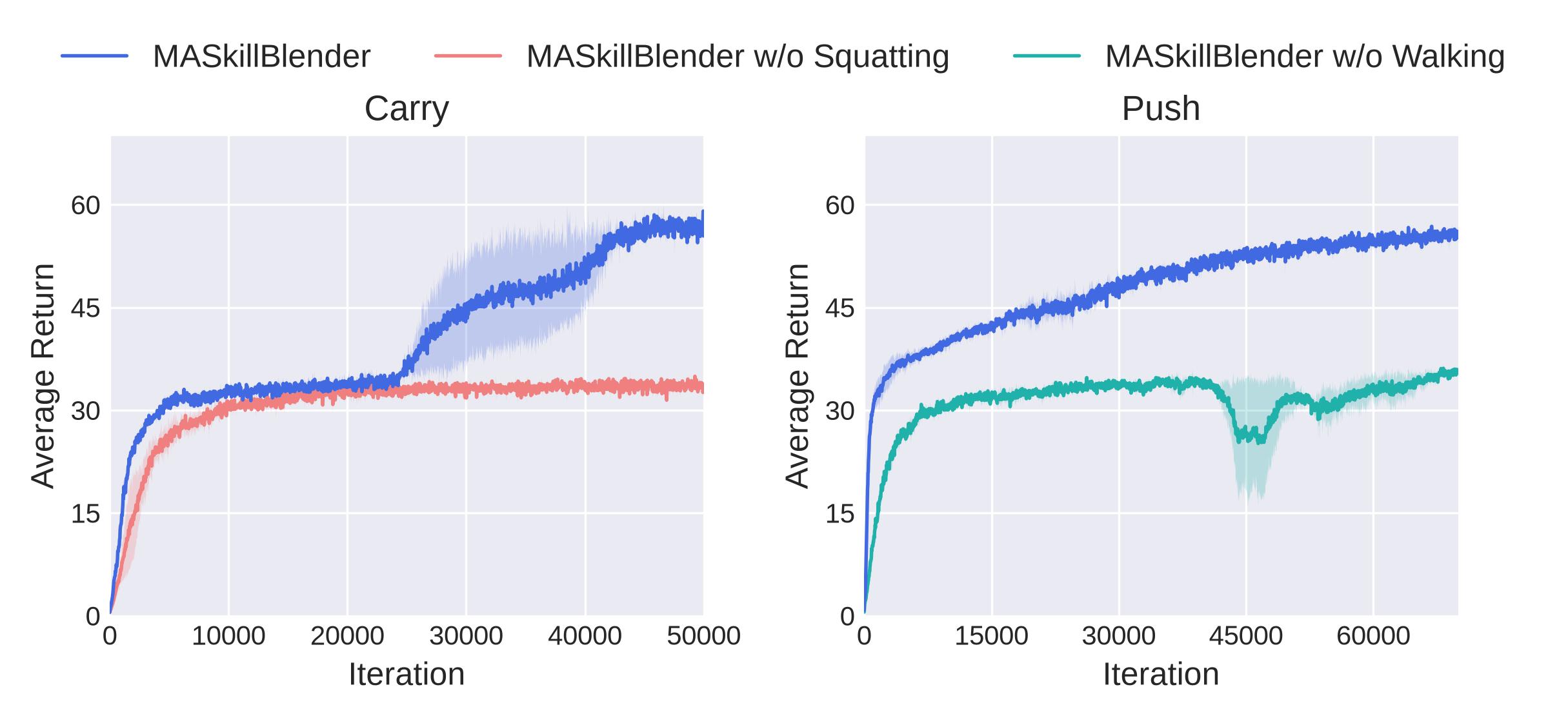}
    \caption{
    Primitive skill ablation on the Unitree H1. Removing \texttt{Squatting} from \texttt{Carry} or \texttt{Walking} from \texttt{Push} substantially degrades learning performance.
    }
    \label{fig:primitive_skill_ablation}
\end{figure*}

\begin{table*}[t]
\centering
\small

\caption{Robustness evaluation of MASkillBlender under deployment noise and delays on the Unitree H1 tasks.}
\label{tab:deployment_robustness}

\begin{tabular}{C{1.4cm}C{2.1cm}C{3.1cm}C{3.5cm}}
\toprule
Task & Metric & Nominal & Constrained \\
\midrule

\multirow{3}{*}{\texttt{Carry}}
& Return~($\uparrow$)
& \textbf{67.928 $\pm$ 1.563}
& 63.462 $\pm$ 11.406 \\

& Error$_{\mathrm{box}}$~($\downarrow$)
& \textbf{0.016 $\pm$ 0.008}
& 0.073 $\pm$ 0.201 \\

& Success~($\uparrow$)
& \textbf{100.0\%}
& 92.0\% \\

\cmidrule(lr){1-4}

\multirow{3}{*}{\texttt{Push}}
& Return~($\uparrow$)
& \textbf{61.137 $\pm$ 5.400}
& 52.744 $\pm$ 12.604 \\

& Error$_{\mathrm{box}}$~($\downarrow$)
& \textbf{0.116 $\pm$ 0.063}
& 0.158 $\pm$ 0.089 \\

& Success~($\uparrow$)
& \textbf{86.0\%}
& 74.0\% \\

\cmidrule(lr){1-4}

\multirow{4}{*}{\texttt{Move}}
& Return~($\uparrow$)
& \textbf{90.622 $\pm$ 1.022}
& 90.377 $\pm$ 2.289 \\

& Error$_{\mathrm{pos}}$~($\downarrow$)
& \textbf{0.010 $\pm$ 0.003}
& 0.014 $\pm$ 0.004 \\

& Error$_{\mathrm{head}}$~($\downarrow$)
& 22.109 $\pm$ 6.552
& \textbf{21.834 $\pm$ 10.831} \\

& Success~($\uparrow$)
& \textbf{100.0\%}
& 98.0\% \\

\bottomrule
\end{tabular}

\end{table*}

\subsection{Policy Robustness under Deployment Variations}  \label{appendix_policy_robustness}
To further evaluate MASkillBlender under more realistic deployment conditions, we consider two complementary settings beyond the nominal task environments. First, we construct constrained variants of \texttt{Carry}, \texttt{Push}, and \texttt{Move} by introducing localization errors, asynchronous control delays, and, for \texttt{Move}, inter-robot communication delays. Second, we evaluate long-horizon execution from asymmetric initial configurations on \texttt{Carry} and \texttt{Push}, where existing locomotion and whole-body skills are composed to approach the interaction region before the learned coordination policy takes over.

\paragraph{Robustness to Deployment Noise and Delays.}
For \texttt{Carry}, each humanoid independently experiences a control delay of 0, 10, or 20\,ms~\citep{zhang2024wococo}, which is sampled at the beginning of each episode and kept fixed throughout the rollout. To model localization errors in real-world perception and state estimation, we perturb the estimated box position by up to $\pm 0.02$\,m along each axis and its orientation by up to $\pm 0.1$\,rad. In addition, an independent positional error of up to $\pm 0.02$\,m is added to the positions of both wrists of each humanoid to account for inaccuracies in end-effector localization.

For \texttt{Push}, we apply the same independent control delays of 0, 10, or 20\,ms. Since the box remains on the ground throughout the task, we use relatively smaller perturbations for its estimated pose, with position errors of up to $\pm 0.02$\,m in the horizontal plane and $\pm 0.01$\,m vertically, and orientation errors of up to $\pm 0.01$\,rad for roll and pitch and $\pm 0.05$\,rad for yaw. Each humanoid also receives an independent wrist-position error of up to $\pm 0.02$\,m.

For \texttt{Move}, each humanoid independently experiences a control delay of 0, 10, or 20\,ms. To model localization uncertainty, its estimated base position is perturbed by up to $\pm 0.02$\,m in the horizontal plane and $\pm 0.01$\,m vertically. We further introduce inter-robot communication delays of 0, 10, 20, or 30\,ms for the exchanged position estimates. This models communication latency in decentralized multi-robot systems~\citep{chen2026rhythm}.

We directly evaluate the same policy checkpoint used in the nominal environment for 50 rollouts without any additional training or fine-tuning. As shown in Table~\ref{tab:deployment_robustness}, MASkillBlender maintains effective coordination across all three tasks under the introduced deployment noise and delays, which demonstrates its robustness to these deployment variations. This robustness may be partly attributed to the domain randomization already incorporated during training. We also observe a more noticeable drop in success rate on \texttt{Push}, which could potentially be mitigated by retraining the policy with broader domain randomization covering these deployment variations.

\begin{figure*}[t]
    \centering
    \includegraphics[width=\textwidth]{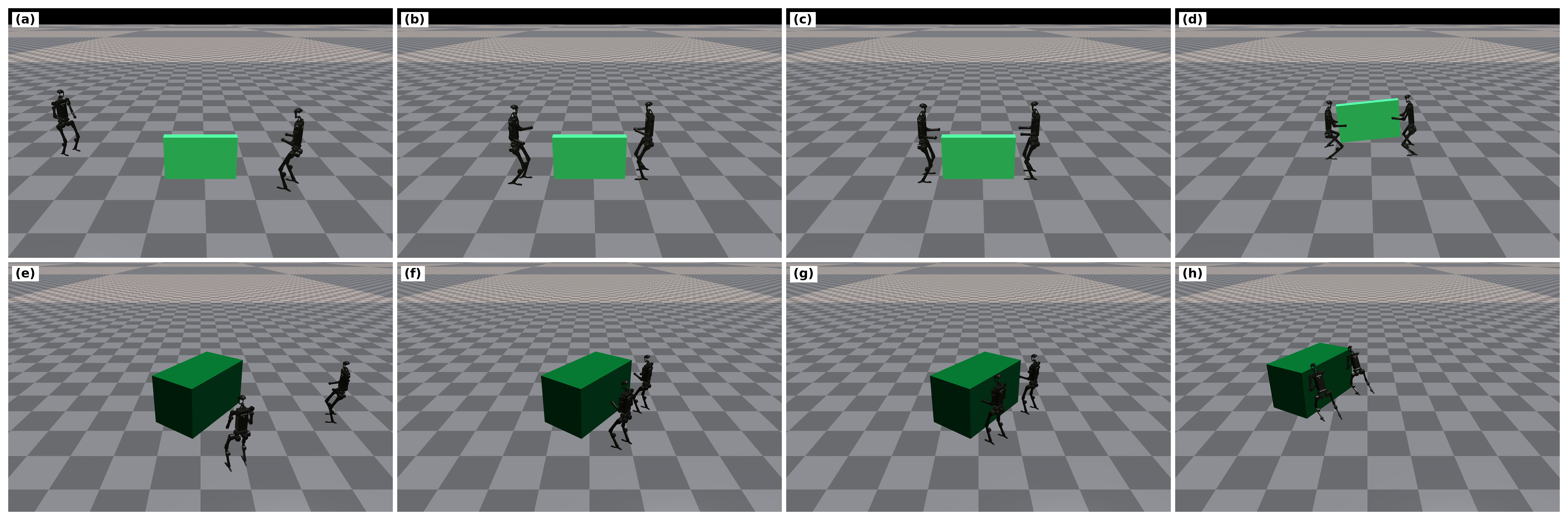}
    \caption{
    Long-horizon deployment from asymmetric initial configurations on the Unitree H1.
    The top row shows \texttt{Carry} and the bottom row shows \texttt{Push}.
    From left to right, the snapshots illustrate the asymmetric initial configuration, one humanoid arriving while the other is still approaching, both humanoids reaching the predefined task poses, and the final coordinated task execution.
    }
    \label{fig:long_horizon_asymmetric}
\end{figure*}

\paragraph{Long-Horizon Deployment from Asymmetric Initial Configurations.}
To facilitate efficient coordination learning, the humanoids in our benchmark tasks are initialized close to the object and interaction region. Nevertheless, following the idea of task decomposition~\citep{zhang2024wococo}, the learned coordination policies can be directly extended to asymmetric initial configurations without additional training by composing them with existing primitive skills.

For both \texttt{Carry} and \texttt{Push}, the long-horizon task is divided into three steps. The two humanoids first move from their asymmetric initial positions to predefined poses near the box. If one humanoid arrives earlier, it waits until the other is also in place. Once both humanoids are ready, they switch to the learned task policy to transport or push the box toward the target. The pre-trained \texttt{walking} skill is used to approach the box, while the \texttt{squatting} skill maintains a stable posture during waiting.

As shown in Fig.~\ref{fig:long_horizon_asymmetric}, this task decomposition allows the learned coordination policies to be directly extended to more complex asymmetric initial configurations. By reusing the pre-trained primitive skills for navigation and maintaining a stable waiting posture, MASkillBlender can accomplish long-horizon cooperative behaviors without additional policy training.

Existing multi-humanoid methods such as TeamHOI~\citep{lionar2026teamhoi} learn a unified policy to cover both approaching the object and the subsequent cooperative transportation. To enable behavioral transitions within a single policy, each humanoid explicitly observes the states of the other humanoids in addition to its own state and task-related information. Such peer information is useful during the approaching stage, since each humanoid needs to reason about whether its teammates have reached the interaction region and when coordinated manipulation should begin. However, once cooperative transportation starts, coordination can be achieved primarily from the humanoid's own state and the state of the shared object, without continuously observing explicit teammate states~\citep{gao2024coohoi}. Keeping peer-state observations throughout the entire unified policy therefore introduces an additional dependency during deployment. In real multi-robot systems, these states are typically obtained through inter-robot communication and relative localization, and can consequently be affected by communication delays and asynchronous state updates~\citep{chen2026rhythm}. In contrast, our task decomposition only requires lightweight communication to determine when both humanoids are ready to enter the cooperative stage. After the learned \texttt{Carry} or \texttt{Push} policy is activated, each humanoid executes the decentralized policy without explicit teammate-state communication. This design reduces the dependence of cooperative manipulation on continuous inter-robot communication and makes the execution less sensitive to communication latency.

\subsection{Sim2Sim Transfer}  \label{appendix_sim2sim_transfer}
To evaluate the cross-simulator transferability of MASkillBlender, we deploy the learned H1 policies from Isaac Gym to MuJoCo. We consider \texttt{Carry} and \texttt{Move}. For \texttt{Carry}, due to its contact-sensitive object manipulation, we initialize from the policy learned in Isaac Gym and continue training for a short period under a broader domain-randomization setting. For \texttt{Move}, we directly deploy the Isaac Gym policy in MuJoCo without additional training.

We evaluate both tasks over 100 episodes in MuJoCo, with the results reported in Table~\ref{tab:sim2sim_results}. For \texttt{Carry}, the two humanoids are able to carry the box through most episodes. In 92\% of the sampled episodes, the final box-position error is no greater than $0.1$\,m. For \texttt{Move}, MASkillBlender achieves a success rate of 99\%. Fig.~\ref{fig:sim2sim_mujoco} shows successful coordinated behaviors in MuJoCo. These results demonstrate that the learned policies can be effectively transferred across simulators.

\begin{figure*}[t]
    \centering
    \begin{minipage}[t]{0.40\textwidth}
        \centering
        \includegraphics[width=\linewidth]{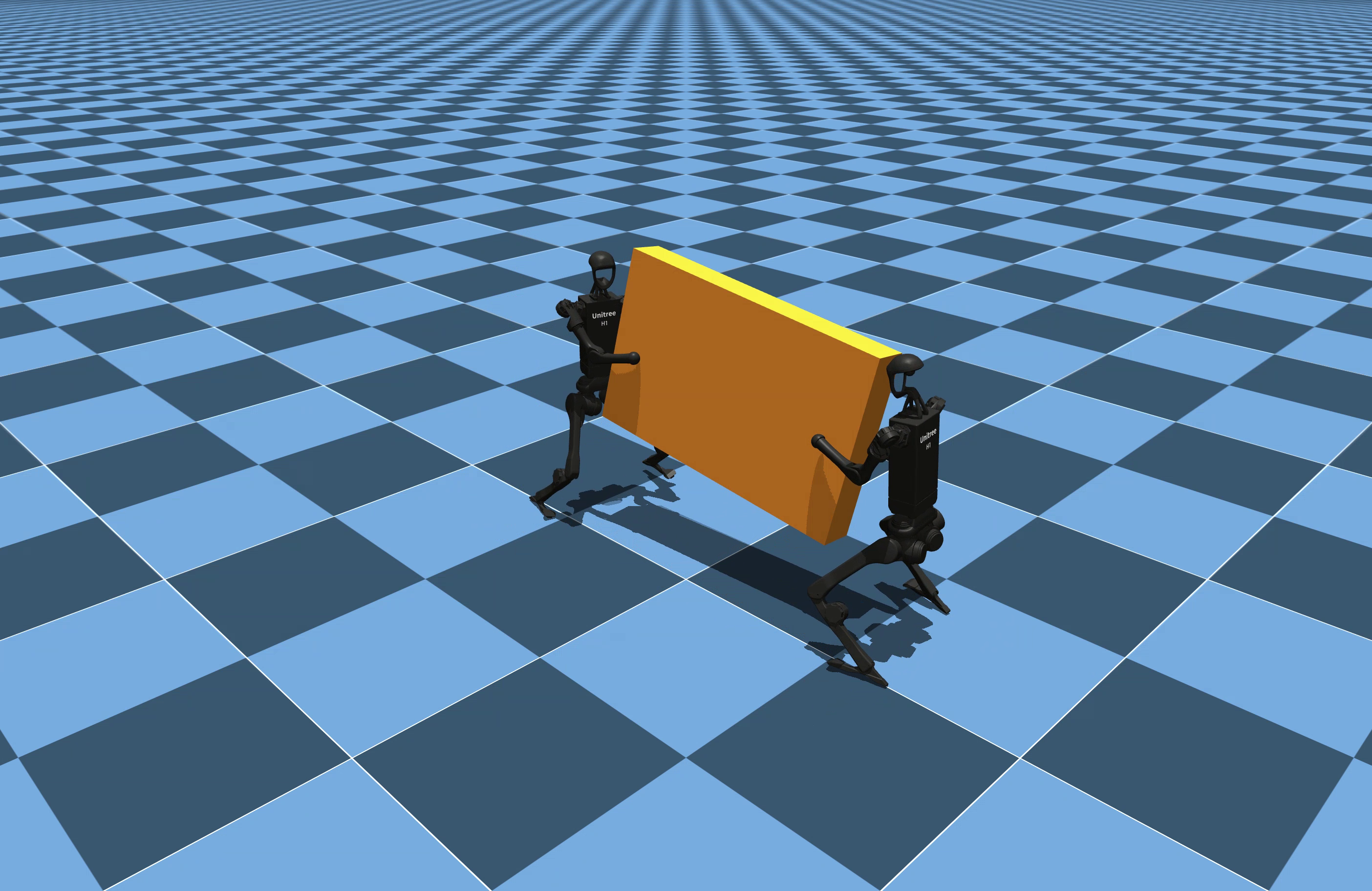}
        \vspace{-2mm}
        \centerline{(a) \texttt{Carry}}
    \end{minipage}
    \hspace{0.03\textwidth}
    \begin{minipage}[t]{0.40\textwidth}
        \centering
        \includegraphics[width=\linewidth]{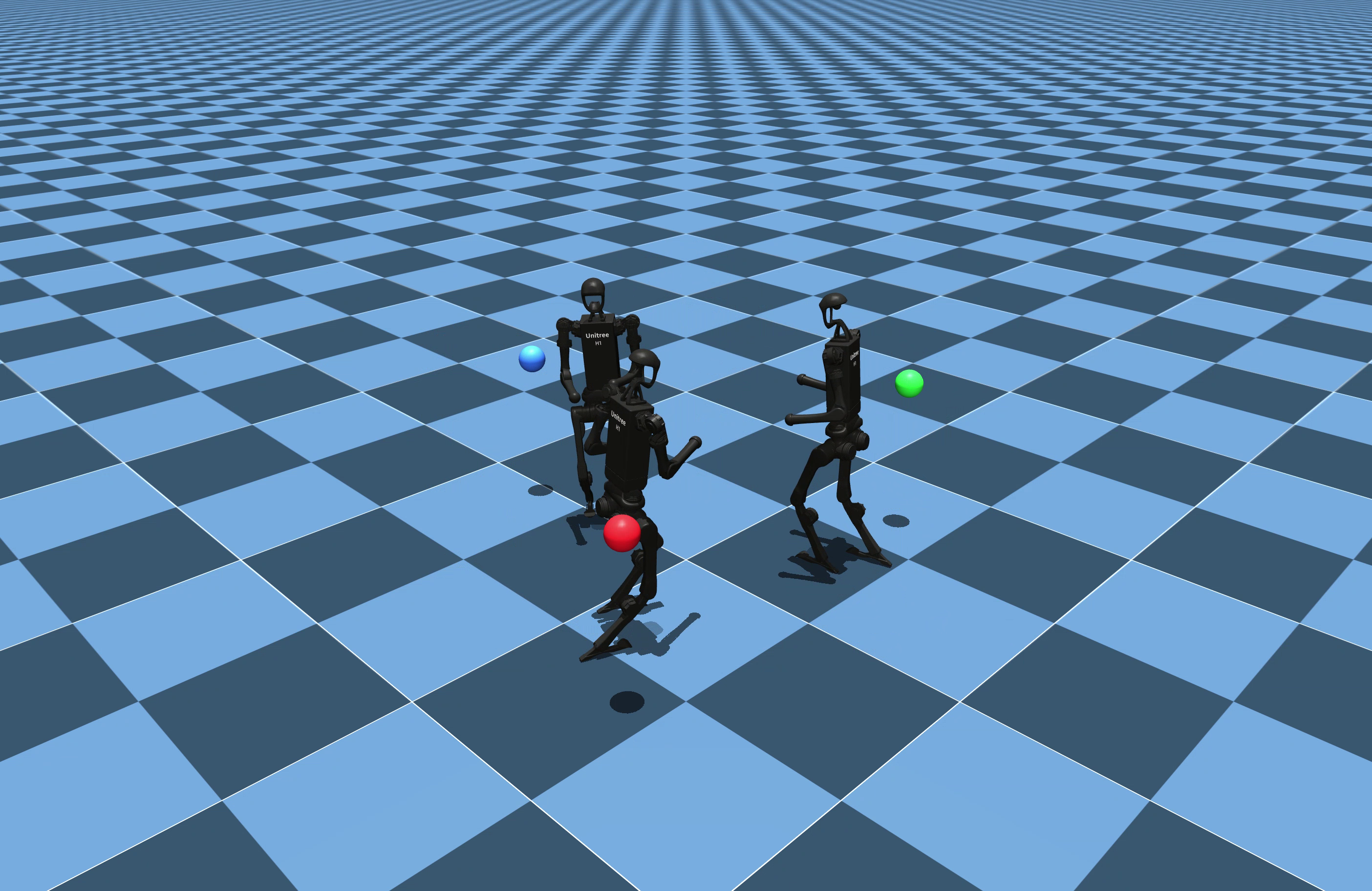}
        \vspace{-2mm}
        \centerline{(b) \texttt{Move}}
    \end{minipage}

    \caption{
    Sim2Sim deployment of MASkillBlender in MuJoCo on
    (a) \texttt{Carry} and (b) \texttt{Move}.
    }
    \label{fig:sim2sim_mujoco}
\end{figure*}

\begin{table}[t]
\centering
\small
\caption{Sim2Sim evaluation of MASkillBlender in MuJoCo on the Unitree H1 tasks.}
\label{tab:sim2sim_results}
\begin{tabular}{C{1.4cm}C{2.1cm}C{3.5cm}}
\toprule
Task & Metric & MuJoCo \\
\midrule
\multirow{2}{*}{\texttt{Carry}}
& Return~($\uparrow$)
& 66.787 $\pm$ 2.221 \\
& Error$_{\mathrm{box}}$~($\downarrow$)
& 0.026 $\pm$ 0.015 \\
\cmidrule(lr){1-3}
\multirow{3}{*}{\texttt{Move}}
& Return~($\uparrow$)
& 86.323 $\pm$ 1.430 \\
& Error$_{\mathrm{pos}}$~($\downarrow$)
& 0.013 $\pm$ 0.005 \\
& Error$_{\mathrm{head}}$~($\downarrow$)
& 38.151 $\pm$ 30.906 \\
\bottomrule
\end{tabular}
\end{table}

\subsection{Two-Humanoid Box Exchange}   \label{appendix_box_exchange}
We also evaluate MASkillBlender on the manipulation of smaller objects. Specifically, we design a two-humanoid box-exchange task, where two humanoids cooperatively exchange two small boxes across a table. As shown in Fig.~\ref{fig:box_exchange}(a), two humanoids stand on opposite sides of a table, with one box initially placed on each side. The task is divided into two stages. In the first stage, each humanoid moves its assigned box to a predefined exchange position near the center of the table. Once both boxes reach their exchange positions, the task switches to the second stage. The humanoids then exchange their assigned boxes and move them to the corresponding final target positions, which are the original positions of the opposite boxes.

For each humanoid, the task-related goal consists of the relative displacement between its currently assigned box and the corresponding target position, the relative displacements between its wrists and the assigned box, the current task stage, and the relative heights of the wrists and elbows with respect to the table. The selected primitive skills are \texttt{Walking} and \texttt{Reaching}. The task reward is composed of box-goal tracking, wrist-to-box proximity, and an arm-height term:
\begin{equation}
R_{\mathrm{team}}(s,a)
=
\omega_1 R_{\mathrm{box}}
+
\omega_2 R_{\mathrm{wrist}}
+
\omega_3 R_{\mathrm{height}},
\end{equation}
where $R_{\mathrm{box}}$ encourages the currently assigned boxes to approach their stage-dependent target positions, $R_{\mathrm{wrist}}$ encourages the humanoids' wrists to remain close to the corresponding boxes, and $R_{\mathrm{height}}$ encourages the wrists and elbows to remain above the table during manipulation. The box and wrist objectives are updated after the stage transition according to the exchanged box assignments. As shown in Fig.~\ref{fig:box_exchange}, the learned policy successfully coordinates the two humanoids throughout both stages of the task. The humanoids first move their assigned boxes to the exchange region and then switch responsibilities to deliver the exchanged boxes to their final target positions.

\begin{figure*}[t]
    \centering
    \includegraphics[width=\textwidth]{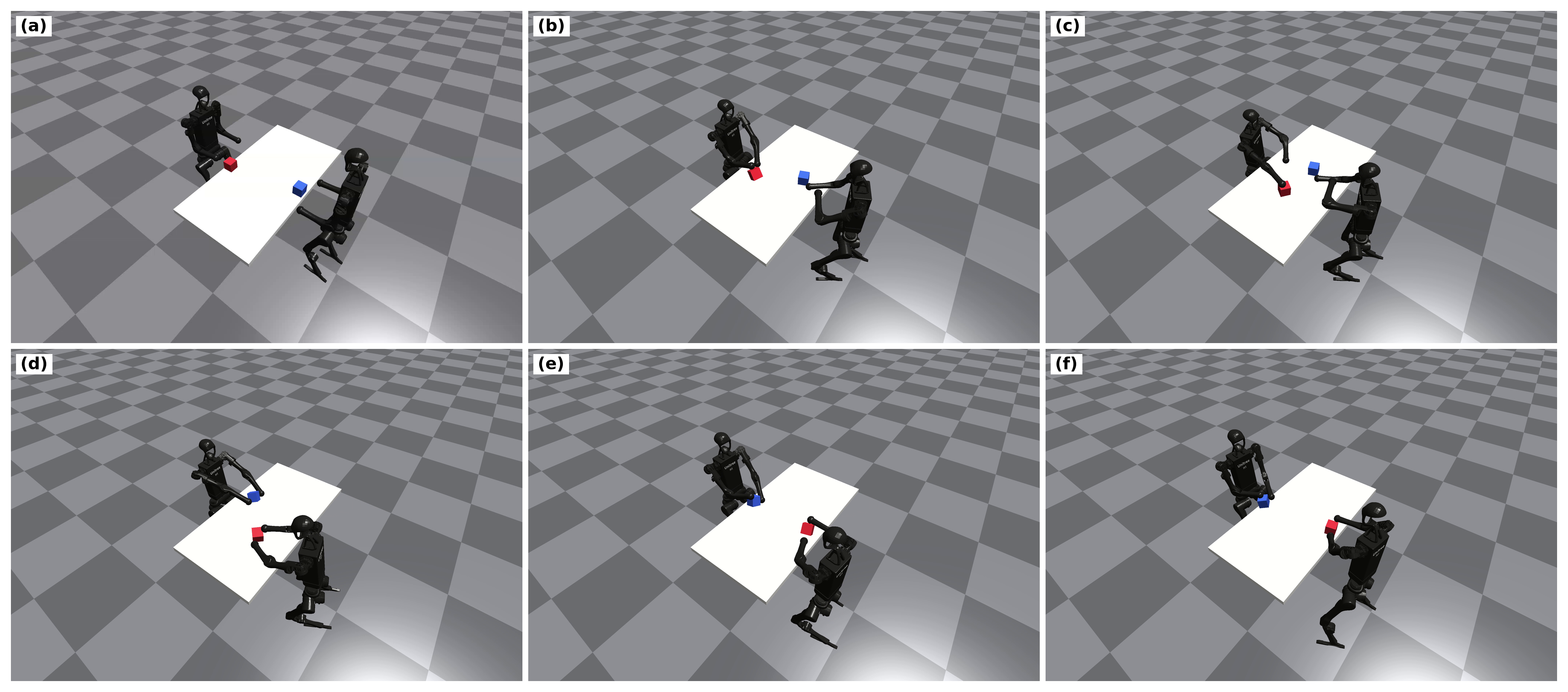}
    \caption{
    Two-humanoid box-exchange task on the Unitree H1.
    The snapshots illustrate the execution sequence from the initial configuration to the final box exchange.
    From (a) to (f), the two humanoids first move their assigned boxes to the exchange region and then deliver the exchanged boxes to their final target positions.
    }
    \label{fig:box_exchange}
\end{figure*}

\begin{figure*}[t]
    \centering
    \includegraphics[width=\linewidth]{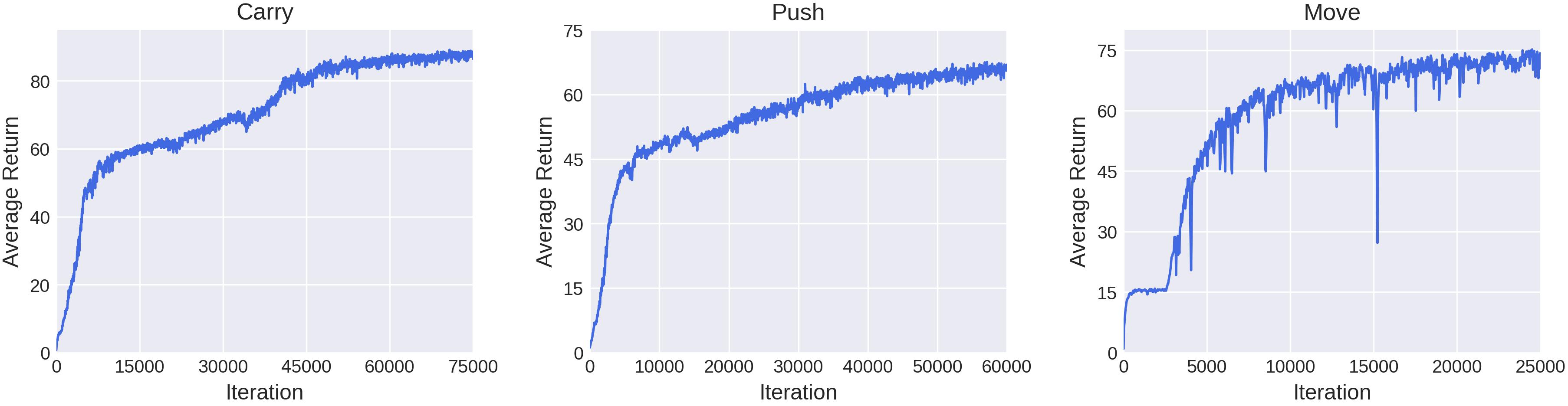}
    \caption{Training curves of MASkillBlender on three Unitree G1 tasks.}
    \label{fig:training_curves_g1}
\end{figure*}

\subsection{Additional Experiments on Unitree G1} \label{appendix_g1_experiments}
We finally evaluate MASkillBlender on the 21-DoF Unitree G1 to examine its applicability across different humanoid embodiments. As illustrated in Fig.~\ref{fig:versatile_task}, we consider the same three coordination tasks. The overall task definitions and learning framework remain unchanged. We only make minor empirical adjustments to the reward coefficients and training iterations for the G1 embodiment.

We provide the training curves on the three G1 tasks in Fig.~\ref{fig:training_curves_g1} and the quantitative results in Table~\ref{tab:g1_results}. The results show that MASkillBlender successfully learns coordinated behaviors across all three tasks. On \texttt{Carry} and \texttt{Push}, the box-position errors remain comparable to those obtained on the H1 embodiment. The G1 policy also learns coordinated movement on \texttt{Move}, although larger position and heading errors are observed. This difference may be related to the performance of the G1 walking primitive. Overall, these results suggest that MASkillBlender generalizes well across different humanoid embodiments and multi-humanoid coordination tasks, while maintaining strong performance without requiring substantial modifications to the overall framework.

\begin{table*}[t]
\centering
\small
\caption{Performance of MASkillBlender on the Unitree G1 tasks.}
\label{tab:g1_results}
\begin{tabular}{C{1.4cm}C{2.1cm}C{3.5cm}}
\toprule
Task & Metric & MASkillBlender \\
\midrule
\multirow{2}{*}{\texttt{Carry}}
& Return~($\uparrow$)
& 91.366 $\pm$ 5.339 \\
& Error$_{\mathrm{box}}$~($\downarrow$)
& 0.025 $\pm$ 0.043 \\
\cmidrule(lr){1-3}
\multirow{2}{*}{\texttt{Push}}
& Return~($\uparrow$)
& 68.851 $\pm$ 13.408 \\
& Error$_{\mathrm{box}}$~($\downarrow$)
& 0.113 $\pm$ 0.108 \\
\cmidrule(lr){1-3}
\multirow{3}{*}{\texttt{Move}}
& Return~($\uparrow$)
& 81.853 $\pm$ 2.115 \\
& Error$_{\mathrm{pos}}$~($\downarrow$)
& 0.039 $\pm$ 0.006 \\
& Error$_{\mathrm{head}}$~($\downarrow$)
& 54.420 $\pm$ 25.458 \\
\bottomrule
\end{tabular}
\end{table*}

\end{document}